\crefname{equation}{}{}
\Crefname{equation}{}{}
\newtheoremstyle{mythmstyle}
  {8 pt} % Space above
  {3 pt} % Space below
  {} % Body font
  {} % Indent amount
  {\bfseries} % Theorem head font
  {.} % Punctuation after theorem head
  {.5em} % Space after theorem head
  {} % Theorem head spec (can be left empty, meaning `normal')
\theoremstyle{mythmstyle}
\newtheorem{theorem}{Theorem}
\newtheorem{lemma}[theorem]{Lemma}
\newtheorem{definition}[theorem]{Definition}
\crefname{definition}{\textbf{definition}}{definitions}
\Crefname{definition}{Definition}{Definitions}
\crefname{assumption}{\textbf{assumption}}{assumptions}
\Crefname{assumption}{Assumption}{Assumptions}
\begin{document}

\title{A Generic Sample Splitting Approach for Refined Community Recovery in Stochastic Block Models}

\author[1]{Jing Lei}
 \author[2]{Lingxue Zhu}
\affil[1,2]{Department of Statistics, Carnegie Mellon University}

\maketitle

\begin{abstract}
%!TEX root = ./exact_rec_tr.tex
We propose and analyze a generic method for community recovery in stochastic block models 
and degree corrected block models.  This approach can exactly recover the hidden 
communities with high probability when the expected node degrees are of order $\log n$ or 
higher.  Starting from a roughly correct community partition given by 
some conventional community recovery algorithm, this method
refines the partition in a cross clustering step.
Our results simplify and extend some of the previous work on exact community recovery, 
discovering the key role played by sample splitting. The proposed method is simple and 
can be implemented with many practical community recovery algorithms.
\end{abstract}

%!TEX root = ./exact_rec_tr.tex
\section{Introduction}
\label{sec:introduction}
%!TEX root = ./exact_rec_tr.tex
Stochastic block models \citep{Holland83} are a popular tool in modeling the co-occurrence of pairwise interactions between individuals in a population of interest.  In recent years, the stochastic block model and its variants, such as the degree corrected block model \citep{KarrerN11}, have been the focus of much research effort in statistics and machine learning,
with wide applications in social networks \citep{FaustW92}, biological networks and information networks \citep[see, e.g.,][]{Kemp06,BickelC09}.

Consider a network data on $n$ nodes recorded in the form of an $n$ by $n$ 
adjacency matrix $A$, where an entry 
$A_{ij}=1$ if an interaction is observed between nodes $i$ and $j$, and 
$A_{ij}=0$ otherwise.  The stochastic block model assumes that the nodes are 
partitioned into $K$ disjoint communities, and that given the community 
partition, $A_{ij}$ is an independent 
Bernoulli random variable whose probability only depends on the community 
membership of $i$ and $j$.  Such a model naturally captures the community 
structure commonly observed in network data.  It is also possible to 
incorporate node specific connectivity parameters, and the resulting 
degree corrected block model allows the network data to 
have arbitrary degree distribution.

A key inference problem in the stochastic block model and its variants is 
to recover the hidden communities from an observed adjacency matrix.  
Various methods have been developed in the last decade, including 
modularity based methods \citep{NewmanG04}, likelihood methods 
\citep{BickelC09,ZhaoLZ12}, convex optimization 
\citep{ChenSX12,LeLV14,AbbeBH14}, spectral methods 
\citep{McSherry01,Coja-Oghlan10,RoheCY11,Jin12,ChaudhuriCT12,Fishkind13,Massoulie13,LeiR14,Vu14}, and 
others \citep{Decelle11,MosselNS13,Anandkumar14}.  These methods are proved 
to be successful under different assumptions with different types of performance 
guarantee.

In this paper we propose and analyze a simple and generic sample 
splitting approach for exact community recovery in both stochastic block 
models 
and degree corrected block models.  By exact community recovery,
we mean that an algorithm can correctly recover the community membership
for all nodes with high probability.
Our method first randomly splits the nodes into two subsets and obtains
a preliminary community recovery on one subset using some conventional 
algorithm.  
Then this preliminary
community recovery is used to obtain exact recovery for the other subset of 
nodes in a cross clustering step.
 We prove that this method can exactly 
recover the communities with high probability when the expected node 
degrees are at least $C\log n$ for some constant $C$, and 
the preliminary community recovery algorithm provides a good approximation 
to the truth.

Exact recovery for stochastic block models was first studied by 
\cite{McSherry01}, where a
combinatorial projection spectral method was combined with sample splitting. 
Recently 
\cite{Vu14} modified this
method to a simpler projection based on singular value decomposition,
combined with 
multiple sample splittings. \cite{ChaudhuriCT12} extended the method
of \cite{McSherry01} to a special case of degree corrected block models
under stronger assumptions on the edge density.
Our results provide further insights to these works by showing that
the key ingredient for exact recovery is indeed 
sample splitting, which 
can be used in 
combination with a much broader class of community recovery algorithms.
In particular, the simple spectral clustering method,
which applies $k$-means to the top eigenvectors of the adjacency matrix,
is a popular method with good empirical performance.  But there is no
theory so far that guarantees its ability of exact recovery.  Our result
implies that exact recovery can be achieved by combining simple spectral 
clustering with sample splitting.

There are other methods for exact recovery in stochastic block models. 
\cite{BickelC09} proved exact recovery for a profile likelihood estimator,
for models with
 node degrees of order $\omega(\log n)$. It is 
computationally demanding to maximize the profile likelihood.
\cite{ChenSX12} studied a convex optimization method, which
requires $\Omega((\log n)^4)$ expected node degrees for exact recovery.
In contrast, 
our theory requires the expected node degrees to be at least $C\log n$
for some constant $C$. 
This matches the optimal rate 
established in \cite{AbbeBH14}, in that exact recovery is impossible when 
the maximum node degree is less than $c\log n$ for some constant $c$.    
Moreover, to the best of our knowledge, we give the first exact 
recovery method for general degree corrected block models with 
$\Omega(\log n)$ 
expected node degrees.

\section{Background}
\label{sec:prelim}
%!TEX root = ./exact_rec_tr.tex
In a stochastic block model, the nodes of a network are partitioned into $K$
disjoint communities. Let $g_i\in\{1,...,K\}$ be the community label of 
node 
$i$. The observed data is an $n\times n$ symmetric binary random matrix $A$ 
with independent upper-diagonal entries $A_{ij}$ $(1\le i<j\le n)$: 
$P(A_{ij}=1)=1-P(A_{ij}=0)=B_{g_i g_j}$,  where $B\in [0,1]^{K\times K}$ is a 
symmetric 
matrix representing the community-wise edge probabilities.  For convenience we assume 
$A_{ii}=0$ for all $i$. 
Throughout this paper we assume that $K$, the number of communities, is 
known.
% Let
% $\Phi\in\mathbb R^{n\times K}$ be the corresponding membership matrix where
% the $i$th row of $\Phi$ takes value 1 on column $g_i$ and 0 elsewhere, the
% we can write
% \begin{equation}\label{eq:sbm-matrix}
% \mathbb E(A) = P - {\rm diag}(P),~~P =  \Phi B \Phi^T\,,
% \end{equation}
% where ${\rm diag}(P)$ is the matrix obtained by setting all off-diagonal
% entries of $P$ to zero.  

The community recovery problem concerns estimating the membership vector
$g=(g_1,...,g_n)$, up to a label permutation.  It is well-known that the
hardness of community recovery depends on (i) the sample size $n$, 
(ii) number of
communities $K$, (iii) differences between the rows of $B$, and (iv) the 
magnitude
of the entries in $B$, which controls the overall density of edges in the 
observed network.
In most theoretical studies of community recovery, it is common practice to
consider the large sample behavior where $n$ grows to infinity, while
other model parameters, such as $K$ and $B$, change as functions of $n$.
  In the current paper, for simplicity we focus on the 
network edge density, and fix other parameters as constants. 
Our analysis can be used to investigate dependence on other model parameters,
as discussed in \Cref{sec:discussion}.
  To this end, we assume that
\begin{equation}\label{eq:scaling}
B = \alpha_n B_0\,,
\end{equation}
where $B_0$ is a $K\times K$ non-negative symmetric constant matrix with 
maximum entry 1.  In \cite{AbbeBH14} it is shown that, for a special class 
of stochastic block models where $K=2$ and $B_0(1,1)=B_0(2,2)=1$, exact recovery is possible if and only if $\alpha_n > C\log n /n$ for a constant $C$ depending on
the off-diagonal entry $B_0(1,2)$.  For general 
stochastic block models, \cite{McSherry01} studied a spectral method with
exact recovery when $\alpha_n=\Omega((\log n)^{3.5} /n)$.  \cite{Vu14} 
improved this result to $\alpha_n\ge C\log n /n$ for a sufficiently large $C$.
Other methods, such as convex optimization \citep{ChenSX12} and profile 
likelihood \citep{BickelC09}, require stronger conditions on $\alpha_n$.

The algorithm proposed by \cite{McSherry01} first randomly splits the 
columns of $A$ into two sets. Then each set of columns are projected onto a
$K$ dimensional subspace constructed from the other set.  The final community
recovery is given by applying
a clustering algorithm to the $n\times n$ matrix obtained after
the column projection.  In the main algorithm analyzed in that paper,
the $K$ dimensional subspaces are obtained using a combinatorial approach
called CProj, which is more complicated than
some natural choices such as singular value decomposition.  
\cite{Vu14} simplified
the CProj subroutine by using multiple random splits and projection
onto singular subspaces of the split matrices.  Although this method gives
stronger theoretical guarantees, the procedure 
remains somehow cumbersome
and it is not clear how the random splitting and low dimensional projection work 
together to give exact recovery.

In this paper we argue that a single 
sample splitting can lead to exact recovery
under a much broader context.  
In particular, one can start from \emph{any} community recovery 
algorithm with approximate recovery guarantee,
and use a single sample splitting to refine the output to achieve exact recovery.
The key idea is that once we have a roughly correct community membership
for a subset of the nodes, it can be used to produce exact recovery for the
remaining nodes.  This is described in detail in Algorithm 1.
\begin{figure}[t]
  \begin{center}
\fbox{\parbox{5in}{
\begin{center}{\sf Algorithm 1:
Cross Clustering (CrossClust)}\end{center}
 \textbf{Input:} adjacency matrix $A$; subset of nodes $\mathcal V_1$; subset of nodes 
 $\mathcal V_2$; membership vector $\hat g^{(1)}$ on $\mathcal V_1$.\\
 \textbf{Require subroutine:} distance based clustering algorithm $\mathcal D$.
\begin{enumerate}%[itemsep=-3pt,topsep=2pt]
\item For each $v\in\mathcal V_2$, let $\hat h_v\leftarrow(\hat h_{v,1},...,\hat 
h_{v,K})$ with
$$
\hat h_{v,k} \leftarrow \frac{\sum_{v'\in \mathcal V_1,g^{(1)}_{v'}=k} A_{v,v'}}
{\#\{v':v'\in\mathcal V_1,g^{(1)}_{v'}=k\}}\,.
$$
\item Output $\hat g^{(2)}\leftarrow \mathcal D(\{\hat h_v:v\in \mathcal V_2\}, K)$.
\end{enumerate}
}}
\end{center}
\end{figure}

The intuition behind Algorithm 1 is very natural.  Suppose we have 
approximately correct memberships for nodes in $\mathcal V_1$. Then
we can use this membership to estimate the community-wise edge probability
for each node in $\mathcal V_2$. For $v\in\mathcal V_2$
we must have
$$
\hat h_{v,k}\approx B_{g_v,k},~~k=1,...,K\,.
$$
Therefore, if two nodes $v$ and $v'$ are in the same community, then
their corresponding $\hat h$ vectors shall both be close to the same row of $B$.
This gives us a good embedding of these nodes in a $K$ dimensional Euclidean
space with a nearly perfect clustering structure.  If such an embedding is good 
enough, 
for example, the distance between any two cluster centers is
at least four times larger than the distance between any point to its center, 
then, as pointed out in \cite{Vu14}, some classical 
distance-based clustering algorithms such as minimum spanning tree can
perfectly recover the communities.

This algorithm also reveals the need for sample splitting. In the beginning, 
it may seem
natural to obtain an approximate recovery for the entire set of nodes and
calculate the $K$-dimensional embedding for each node based on this
community partition.  This method, although may work well
in practice, is hard to analyze. The complex dependence between the numerator 
and denominator in the definition of $\hat h_{v,k}$ makes it highly challenging to
analyze the error of $\hat h_{v,k}$ as an estimator of $B_{g_v,k}$.  In contrast, 
sample 
splitting introduces more independence between these two terms and makes the analysis much more 
tractable.

Our main algorithm for stochastic block models based on sample splitting is described in Algorithm 2.
\begin{figure}[t]
  \begin{center}
\fbox{\parbox{5in}{
\begin{center}{\sf Algorithm 2:
Community Recovery by Sample Splitting (SplitClust)}\end{center}

%\vspace{-2mm}

 \textbf{Input:} adjacency matrix $A$; number of communities $K$.\\
 \textbf{Require subroutine:} \textsf{CrossClust}; initial community recovery 
 algorithm $\mathcal S$.
\begin{enumerate}%[itemsep=-3pt,topsep=2pt]
\item Randomly split the nodes into two equal sized sets, $\mathcal V_1$ and 
$\mathcal V_2$.
\item $\hat g^{(1)}\leftarrow\mathcal S(A^{(1)}, K)$, where $A^{(1)}$ is 
the induced adjacency matrix over $\mathcal V_1$.
\item $\hat g^{(2)}\leftarrow{\sf CrossClust}(A, \mathcal V_1, \mathcal V_2, \hat g^{(1)})$.
\item Output $\hat g\leftarrow{\sf CrossClust}(A, \mathcal V_2, [n], \hat g^{(2)})$.
\end{enumerate}
}}
\end{center}
\end{figure}
The initial community recovery algorithm $\mathcal S$ can be chosen by the user.  
As shown in 
\Cref{sec:main-results} below, it only needs to satisfy some mild accuracy
requirement, which can be achieved by many popular and practical methods, such
as spectral methods and likelihood based methods.

\section{Main results for stochastic block models}
\label{sec:main-results}
%!TEX root = ./exact_rec_tr.tex
% In our analysis of Algorithm 2, we will focus on random events that take high probability.
\paragraph{Terminology}
Throughout this paper, the term ``with high probability'' means
 ``with probability at
least $1-O(n^{-1})$''.
For two community membership vectors $g$ and $\hat g$, we say $\hat g$ makes $m$ recovery errors as an estimate 
of $g$, where $m$ is the smallest integer such that there exists a label permutation on $\hat g$ under 
which $\hat g$ and $g$ disagree
at exactly $m$ entries.  We write $\hat g=g$ if $\hat g$ makes zero error.
For a membership vector $g$ on $[n]$, and a subset $\mathcal V\subseteq [n]$, 
$g^{(\mathcal V)}$
denotes the membership vector obtained by confining $g$ on $\mathcal V$. We
use $g^{(j)}$ in place of $g^{(\mathcal V_j)}$ $(j=1,2)$ for simplicity.
We use $\| \cdot \|$ to denote the $\ell_2$ norm of vectors in Euclidean spaces. 
For any matrix $A$, we refer to its $(i,j)$th element as $A(i,j)$, and its $i$-th row 
as $A(i, \cdot)$. Sometimes $A_{i,j}$ is used in place of $A(i,j)$ for brevity.

To focus on the network edge sparsity, we assume that the smallest community size is proportional to $n$.
\begin{definition}[Proper membership]\label{def:proper}
Given a subset $\mathcal V\subseteq [n]$, a membership vector $g$ on $\mathcal V$, and a positive constant $\pi_0\in(0,1/K]$, we say $g^{(\mathcal V)}$ is \emph{$\pi_0$-proper} 
if $\min_{1\le k\le K}|\{i\in\mathcal V:g_i=k\}|\ge\pi_0 n$.  
\end{definition}

  We make the following assumptions.
\begin{enumerate}
  \item [(A1)] In the network sparsity scaling in \eqref{eq:scaling},
 $B_0$ is a constant matrix such that the minimum $\ell_2$ difference between
two rows of $B_0$ is at least $\gamma$, a positive constant.
\item [(A2)] The true community membership $g$ on $\mathcal V = [n]$ is
$\pi_0$-proper for some constant $\pi_0>0$.
\item [(A3)] The initial community recovery algorithm $\mathcal S$, with high probability, 
has recovery error at most 
$n/f(n\alpha_n)$ when $\alpha_n=\Omega(\log n / n)$, where $f$, possibly depending on
$\pi_0$ and $B_0$, satisfies 
$\lim_{x\rightarrow\infty}f(x)\uparrow\infty$.
\end{enumerate}

Assumption A1 puts a lower bound on pairwise difference between the rows of $B_0$, 
which is a minimum requirement for the communities to be distinguishable.  This also implies that $K$ is 
a fixed 
constant.  Assumption A2 puts a lower bound on the minimum community size. These are mainly 
for
simplicity of
presentation, so that we can focus on the dependence on the network sparsity.  Our 
argument does allow
for some mild generalizations so that both the number of communities and the minimum 
community size can
change with $n$ in a non-trivial manner, as discussed in \Cref{sec:discussion}.  
Assumption A3 puts a mild requirement on the 
accuracy of
the initial community recovery algorithm.  That is, with high probability the initial algorithm $\mathcal S$
correctly recovers the membership of all but a vanishing
proportion of nodes, as the expected node degrees grow at $\Omega(\log n)$ rate or faster.  
It can be satisfied by some very simple and 
practical methods.
For example, the simple spectral clustering method, which applies $k$-means to the rows 
of leading eigenvectors
of the adjacency matrix, satisfies Assumption A3 with 
$f(n\alpha_n)=L(\pi_0,B_0) 
n\alpha_n$ for some function $L>0$ independent of $n$ when $B_0$ has full rank  \citep{LeiR14}.  Our 
theoretical developments will
focus on the case where $n$ is large enough
such that $f(n\alpha_n)$ dominates any constant quantities involved in the analysis.

In the following analysis, we consider Algorithms 1 and 2 with subroutine $\mathcal D$ being minimum
spanning tree clustering, which constructs a minimum spanning tree on the input data and removes the
$K-1$ largest edges. We have the following performance guarantee of Algorithm 1.

%% Thm 1
\begin{theorem}[Exact recovery using sample splitting]\label{thm:main-sbm}
  Given a membership vector $g$ and connectivity matrix $B=\alpha_n B_0$ satisfying Assumptions A1, A2, 
  let $\hat g$ be the output of \textsf{SplitClust} (Algorithm 2) with
input matrix $A$ generated by the corresponding stochastic block model
and subroutine $\mathcal S$ satisfying Assumption A3.
There exists a constant $C$ such that if $\alpha_n\ge C\log n / n$ then with high probability we
 have $\hat g=g$.
\end{theorem}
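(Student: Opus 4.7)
The plan is to follow Algorithm 2 step by step, exploiting at each stage the independence afforded by sample splitting. I begin by verifying that the random split preserves $\pi_0$-properness (with a slightly smaller constant) on both halves with high probability via a Chernoff bound on the class sizes. Assumption A3, applied to the induced adjacency $A^{(1)}$ on $\mathcal V_1$, then gives a preliminary $\hat g^{(1)}$ that, up to a label permutation $\sigma_1$, agrees with $g^{(1)}$ on all but an $\eta_n := 1/f(n\alpha_n)$ fraction of $\mathcal V_1$; note $\eta_n = o(1)$.

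The heart of the argument is the step-3 CrossClust. Crucially, $\hat g^{(1)}$ is a function of the $\mathcal V_1\times\mathcal V_1$ block of $A$, whereas $\hat h_v$ for $v\in\mathcal V_2$ is built from entries of the disjoint $\mathcal V_2\times\mathcal V_1$ block, so conditionally on $\hat g^{(1)}$ each $\hat h_{v,k}$ is an average of independent Bernoullis. A direct expectation computation, using the $\eta_n$ misclassification bound, yields $E[\hat h_{v,k}\mid\hat g^{(1)}]=\alpha_n B_0(g_v,\sigma_1^{-1}(k))+O(\alpha_n\eta_n)$, and Bernstein's inequality together with a union bound over $v\in\mathcal V_2$ and $k$ gives $|\hat h_{v,k}-E\hat h_{v,k}|=O\bigl(\sqrt{\alpha_n\log n/n}\bigr)$ uniformly, with probability $1-O(n^{-1})$. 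By Assumption A1 the $K$ candidate centers $\alpha_n B_0(\sigma_1^{-1}(\cdot))$ are pairwise $\gamma\alpha_n$-separated in $\ell_2$. For $\alpha_n\ge C\log n/n$ with $C$ chosen large relative to $\gamma^{-2}$, $K$, and $\pi_0^{-1}$, the within-cluster radius $O(\sqrt{K\alpha_n\log n/n}+\sqrt{K}\alpha_n\eta_n)$ is strictly below $\gamma\alpha_n/8$, so the separation-to-radius condition underlying minimum-spanning-tree clustering noted in the excerpt is satisfied, and $\hat g^{(2)}=g^{(2)}$ up to some permutation $\sigma_2$ with high probability.

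The step-4 CrossClust repeats this argument with $\mathcal V_2$ as the reference. For targets $v\in\mathcal V_2$ the relevant entries $A_{v,v'}$, $v'\in\mathcal V_2$, lie in the $\mathcal V_2\times\mathcal V_2$ block and are independent of everything used in steps 2 and 3, so the analysis transfers verbatim. The main technical obstacle is the case $v\in\mathcal V_1$: the entries $A_{v,v'}$ with $v'\in\mathcal V_2$ were already used to compute $\hat g^{(2)}$ in step 3, so $\hat g^{(2)}$ and $\hat h_v$ are dependent, and a naive union bound over all $\pi_0$-proper labelings of $\mathcal V_2$ would fail since $K^{n/2}e^{-cn\alpha_n}\gg 1$ when $n\alpha_n=\Theta(\log n)$. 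I bypass this by introducing an oracle statistic $\tilde h_{v,k}$ that averages $A_{v,v'}$ over the \emph{fixed} set $\{v'\in\mathcal V_2:g_{v'}=\sigma_2^{-1}(k)\}$: on the success event $\{\hat g^{(2)}=g^{(2)}\circ\sigma_2^{-1}\}$ from the previous step one has $\hat h_v=\tilde h_v$, while unconditionally each $\tilde h_{v,k}$ is an average of truly independent Bernoullis, to which Bernstein applies after union-bounding over $v$, $k$, and only the $K!$ possible values of $\sigma_2$. The same separation-to-radius check then delivers $\hat g=g$ on $[n]$ with high probability, proving the theorem.
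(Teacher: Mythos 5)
Your proposal is correct and follows essentially the same route as the paper: properness of the random split, Assumption A3 for the pilot estimate, a Bernstein-plus-separation analysis of $\hat h_v$ feeding a minimum-spanning-tree clustering step, and a reduction of the final step to the event $\{\hat g^{(2)}=g^{(2)}\}$ so that the reference labels become deterministic. Your oracle statistic $\tilde h_v$ simply makes explicit the dependence issue for $v\in\mathcal V_1$ in step 4, which the paper handles more tersely by conditioning on $\{\hat g^{(2)}=g^{(2)}\}$ and invoking \Cref{lem:crossclust} with the (now non-random) true membership on $\mathcal V_2$.
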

\Cref{thm:main-sbm}  essentially says that
when the true community partition is balanced and the expected node degrees are sufficiently
 larger than $\log n$, one can always use sample splitting to refine 
an approximately correct
community recovery algorithm to achieve exact recovery. The proof, given in \Cref{sec:proof-sbm},
consists of two 
simple applications of 
the key result \Cref{lem:crossclust} below, which guarantees the accuracy of 
subroutine \textsf{CrossClust}.
%% lemma crossclust
\begin{lemma}[Accuracy of \textsf{CrossClust}]\label{lem:crossclust}
  Let $A$ be an adjacency matrix generated by a stochastic block model satisfying
  the assumptions of \Cref{thm:main-sbm}, and $\mathcal V_1$ be a subset with $\pi_0$-proper membership vector $g^{(1)}$. Let
  $\hat g^{(1)}$ be an estimated membership vector on $\mathcal V_1$ independent of the edges between
  $\mathcal V_1$ and $\mathcal V_2$, with recovery error at most $|\mathcal V_1|/f(|\mathcal V_1|\alpha_n)$.
  There exists a constant $C$ such that if $\alpha_n \ge C \log n / n$, then with high probability, $\hat g^{(2)}$, the output of 
  Algorithm 1, satisfies
  $\hat g^{(2)}=g^{(2)}$.
\end{lemma}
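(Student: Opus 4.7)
The plan is to establish two quantitative properties of the embeddings $\hat h_v$ for $v\in\mathcal V_2$: (i) each $\hat h_v$ concentrates around $\alpha_n B_0(g_v,\cdot)$ within radius $r = \alpha_n\gamma/8$, and (ii) distinct community centers are separated by at least $\alpha_n\gamma$. Together these yield a cluster structure in $\mathbb R^K$ where within-cluster distances are at most $2r$ and between-cluster distances are at least $4r$, so the minimum spanning tree subroutine exactly recovers $g^{(2)}$, as noted in the text following Algorithm~1.

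The first step is to exploit the independence hypothesis. Condition on $\hat g^{(1)}$; then $\{A_{v,v'}: v\in\mathcal V_2, v'\in\mathcal V_1\}$ is a collection of independent Bernoulli variables with means $\alpha_n B_0(g_v,g_{v'})$, unaffected by the conditioning. Introduce the oracle estimator
\[
\tilde h_{v,k} \;=\; \frac{1}{N_k}\sum_{v'\in\mathcal V_1,\,g_{v'}=k} A_{v,v'},
\qquad N_k=|\{v'\in\mathcal V_1:g_{v'}=k\}|,
\]
so that $\mathbb E[\tilde h_{v,k}\mid\hat g^{(1)}]=\alpha_n B_0(g_v,k)$. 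By Assumption~A2 we have $N_k\ge \pi_0 n$, and a Bernstein/Chernoff bound together with a union bound over all $v\in\mathcal V_2$ and all $k\in[K]$ gives, for $\alpha_n\ge C\log n/n$,
\[
\max_{v\in\mathcal V_2,\,k\in[K]} |\tilde h_{v,k}-\alpha_n B_0(g_v,k)| \;\le\; c_1\sqrt{\alpha_n\log n/n} \;\le\; \frac{\alpha_n\gamma}{16\sqrt K}
\]
with high probability, where the last inequality holds once $C$ is large enough.

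The second step controls the perturbation $\hat h_{v,k}-\tilde h_{v,k}$ caused by the misclassified nodes in $\hat g^{(1)}$. Let $m=|\{v'\in\mathcal V_1:\hat g^{(1)}_{v'}\ne g_{v'}\}|\le |\mathcal V_1|/f(|\mathcal V_1|\alpha_n)$. After matching the label permutation, the numerator and denominator of $\hat h_{v,k}$ and $\tilde h_{v,k}$ differ by at most $m$ in the denominator and by at most $m$ (deterministically bounded by $\alpha_n$-scale Bernoullis, but the worst case gives $m$) in the numerator. A short calculation gives $|\hat h_{v,k}-\tilde h_{v,k}|\le 2m/(\pi_0 n)\cdot \max_v \tilde h_{v,k} + m/(\pi_0 n) \le c_2\alpha_n/f(|\mathcal V_1|\alpha_n) = o(\alpha_n)$, using Assumption~A3. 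In the regime $\alpha_n\ge C\log n/n$, this is bounded by $\alpha_n\gamma/(16\sqrt K)$ for $n$ large.

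Combining the two steps, $\|\hat h_v - \alpha_n B_0(g_v,\cdot)\|\le \alpha_n\gamma/8 = r$ for every $v\in\mathcal V_2$ with high probability. Meanwhile, by Assumption~A1, $\|\alpha_n B_0(k,\cdot)-\alpha_n B_0(k',\cdot)\|\ge \alpha_n\gamma$ whenever $k\ne k'$, so centers are at least $8r$ apart and $\hat h_v,\hat h_{v'}$ lying in different true communities are separated by at least $8r-2r=6r>4r$. The minimum spanning tree clustering then partitions $\{\hat h_v:v\in\mathcal V_2\}$ in agreement with the true communities, giving $\hat g^{(2)}=g^{(2)}$.

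The main obstacle is the concentration step, since the tolerance we need is a constant fraction of $\alpha_n$ itself and $\alpha_n$ may be as small as $\Theta(\log n/n)$. This is exactly why the sample-splitting independence matters: without it, $\hat g^{(1)}$ would depend on the cross edges and the simple Bernstein bound on $\tilde h_{v,k}$ would no longer apply. The bookkeeping for the contamination term is routine, but one must remember to apply a single label permutation so that the $m$ bound on misclassifications translates into small additive error.
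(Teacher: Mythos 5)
Your overall architecture matches the paper's: show each $\hat h_v$ concentrates around $B(g_v,\cdot)=\alpha_n B_0(g_v,\cdot)$ within a radius that is a small constant fraction of $\alpha_n\gamma$, invoke the row-separation from Assumption A1, and let the minimum spanning tree subroutine finish. Your oracle term $\tilde h_{v,k}$ and its Bernstein bound correspond to the paper's $T_3$, and your denominator perturbation corresponds to its $T_2$. Both of those steps are fine.

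The gap is in your treatment of the numerator contamination (the paper's $T_1$). You bound the change in the numerator by the \emph{deterministic} worst case $m$ misclassified edges and then assert $|\hat h_{v,k}-\tilde h_{v,k}|\le c_2\alpha_n/f(|\mathcal V_1|\alpha_n)$. That does not follow: dividing the worst-case count $m\le |\mathcal V_1|/f(|\mathcal V_1|\alpha_n)$ by the denominator $\ge \pi_0 n/2$ gives a term of order $1/(\pi_0 f(n\alpha_n))$, not $\alpha_n/f(n\alpha_n)$. Since A3 only requires $f\to\infty$ (e.g.\ $f(n\alpha_n)=Ln\alpha_n=\Theta(\log n)$ for spectral clustering), $1/f(n\alpha_n)$ can be vastly larger than the tolerance $\alpha_n\gamma/(16\sqrt K)=\Theta(\log n/n)$, so the deterministic bound is insufficient by a polynomial factor. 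To get the bound you claim, you must treat $\sum_{v':\hat g^{(1)}_{v'}\neq g^{(1)}_{v'}}A_{v,v'}$ as a sum of at most $m$ independent Bernoullis with means $\le\alpha_n$, \emph{conditional on the misclassified index set}, and apply Bernstein's inequality to show it is $O(m\alpha_n+\log n)$ with high probability. This conditioning is legitimate only because $\hat g^{(1)}$ is independent of the cross edges — which is precisely the step the paper identifies as the one place sample splitting is indispensable. Relatedly, you locate the need for independence in the Bernstein bound for $\tilde h_{v,k}$, but that estimator is indexed by the \emph{true} (deterministic) labels and needs no such argument; the independence is needed exactly for the contamination sum whose index set is random. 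Patching this one step with a conditional Bernstein bound would make your proof essentially coincide with the paper's.
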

\Cref{lem:crossclust} ensures that the subroutine \textsf{CrossClust} produces
exact recovery on $\mathcal V_2$ with high probability.
The probabilistic claim in \Cref{lem:crossclust} is indeed conditional on 
given $\hat g^{(1)}$. Here we do not emphasize the conditional nature
of this result as the randomness is from edges between $\mathcal V_1$ and $\mathcal V_2$, and hence is independent of $\hat g^{(1)}$ by assumption.
The proof of \Cref{lem:crossclust}, as detailed
in \Cref{sec:proof-sbm}, is based on a
careful decomposition of estimation error $|\hat h_{v,k}-B(g_v,k)|$ followed by
large deviation bounds.

\section{Extension to degree corrected block models}
\label{sec:dcbm}
%!TEX root = ./exact_rec_tr.tex
The degree corrected block model \citep{KarrerN11} extends the stochastic block model by introducing 
additional node level
degree heterogeneity.  In addition to the membership vector $g$ and community-wise connectivity matrix
$B$, the degree corrected block model incorporates a parameter $\psi\in(0,1]^n$ to model the node level
activeness. Then the edge $A_{ij}$ between nodes $i$ and $j$ is an independent 
Bernoulli variable
with parameter 
$\psi_i\psi_j B_{g_i g_j}$.  For identifiability, we assume
$\max_{i: g_i = k}\psi_i=1$, for all $1\le k\le K$.  The parameter $\psi_i$ reflects the relative
activeness of node $i$ in its community.  The degree corrected block model is able to model a much
wider range of network data and is more realistic than the regular stochastic block model. There are
relatively fewer results on exact recovery for degree corrected block models.
\cite{ZhaoLZ12} extended the result of \cite{BickelC09}, showing that the profile likelihood estimator
can recover exactly when $\alpha_n=\omega(\log n / n)$.  \cite{ChaudhuriCT12} extended the method of
\cite{McSherry01} to a special case of degree corrected models with a stronger
requirement on the decay rate of $\alpha_n$.  
In the following
we show that the simple sample splitting method can be successful under general
degree corrected block models when $\alpha_n\ge C\log n/n$ for sufficiently
large constant $C$.

Under the degree corrected block model, we need to modify the 
\textsf{CrossClust} algorithm so that the effect of nuisance parameter $\psi$ is cancelled out by
a normalization step.  To this end, we introduce the spherical cross clustering algorithm
in Algorithm 1'.

\begin{figure}[t]
  \begin{center}
\fbox{\parbox{5in}{
\begin{center}{\sf Algorithm 1':
Spherical Cross Clustering (CrossClustSphere)}\end{center}
 \textbf{Input:} adjacency matrix $A$; subset of nodes $\mathcal V_1$; subset of nodes $\mathcal V_2$; 
 membership vector $g^{(1)}$ on $\mathcal V_1$.\\
 \textbf{Require subroutine:} 
 distance based clustering algorithm $\mathcal D$.
\begin{enumerate}%[itemsep=-3pt,topsep=2pt]
\item For each $v\in\mathcal V_2$, let $\hat h_v=(\hat h_{v,1},...,\hat h_{v,K})$ be 
the same as
give in Step 1 of Algorithm 1 (\textsf{CrossClust}).
\item Output $\hat g^{(2)}\leftarrow \mathcal D(\{\hat h_v/\|\hat h_v\|:v\in \mathcal V_2\}, K)$.
\end{enumerate}
}}
\end{center}
\end{figure}

The exact recovery property of the sample splitting approach  can be established for degree corrected
block models under essentially the same conditions as for the regular stochastic block models, with a modified community separation condition and
an additional condition on the nuisance parameter $\psi$.  Recalling that
$B_0(k,\cdot)$ denotes the $k$th row of $B_0$, we assume the following.
\begin{enumerate}
  \item [(A1')] The minimum $\ell_2$ difference between two normalized-rows of $B_0$ is at least $\tilde{\gamma}$, a positive constant:
  $$
  \min_{1\le k< k'\le K}\left\|\frac{B_0(k,\cdot)}{\|B_0(k,\cdot)\|}-
  \frac{B_0(k',\cdot)}{\|B_0(k',\cdot)\|}\right\| \ge \tilde \gamma>0\,.
  $$
  \item [(A4)] $\min_{1\le i\le n}\psi_i\ge \psi_0$ for some constant $\psi_0\in(0,1]$.
\end{enumerate}
Assumption A1' modifies Assumption A1 to account for the normalization step in \textsf{CrossClustSphere}, which is necessary for degree corrected block models
because two rows in $B$ differing only by a constant scaling are indistinguishable
due to the node activeness parameter.   Assumption A4 prevents any node from being too inactive, otherwise there will be too few edges for that node,
making exact recovery unlikely.  Under these assumptions, the spherical spectral clustering method 
described and analyzed
in \cite{LeiR14} satisfies Assumption A3 with $f(n\alpha_n)=L'(\pi_0, B_0)\sqrt{n\alpha_n}$, provided that $B_0$ has full rank.

 %% theorem: DCSBM
\begin{theorem}[Exact recovery for degree corrected block models]\label{thm:main-dcbm}
   Let $A$ be an adjacency matrix generated from a degree corrected block model with 
   membership vector $g$, connectivity matrix $B=\alpha_n B_0$, and node activeness 
   vector
    $\psi$ satisfying Assumptions A1', A2 and A4. 
    Let $\hat g$ be the output of \textsf{SplitClust} (Algorithm 2) using
    subroutine \textsf{CrossClustSphere} (Algorithm 1') instead of \textsf{CrossClust}
    and initial recovery algorithm $\mathcal S$ satisfying Assumption A3.
  There exists a constant $C$ such that if $\alpha_n\ge C\log n / n$ then
  $\hat g=g$ with high probability.
\end{theorem}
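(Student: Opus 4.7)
The plan is to mirror the proof of \Cref{thm:main-sbm}: reduce \Cref{thm:main-dcbm} to a single accuracy lemma for the new subroutine \textsf{CrossClustSphere}, and then invoke that lemma twice according to the structure of \textsf{SplitClust}. Concretely, I would state and prove a degree-corrected analogue of \Cref{lem:crossclust}: if $\hat g^{(1)}$ on $\mathcal V_1$ has at most $|\mathcal V_1|/f(|\mathcal V_1|\alpha_n)$ errors and is independent of the edges between $\mathcal V_1$ and $\mathcal V_2$, then for $\alpha_n\ge C\log n/n$ the output $\hat g^{(2)}$ of Algorithm 1' equals $g^{(2)}$ with high probability. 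Granted this lemma, the first call to \textsf{CrossClustSphere} gives exact recovery on $\mathcal V_2$ (fed by $\hat g^{(1)}$ from the initial algorithm $\mathcal S$, whose A3 guarantee restricts to $\mathcal V_1$), and the second call then produces exact recovery on all of $[n]$.

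For the lemma itself, the core calculation is the mean of $\hat h_{v,k}$ for $v\in\mathcal V_2$. Under DCBM, conditional on $\hat g^{(1)}$, the entries $\{A_{vv'}:v'\in\mathcal V_1\}$ are independent Bernoulli with mean $\psi_v\psi_{v'}\alpha_n B_0(g_v,g_{v'})$. Splitting the sum defining $\hat h_{v,k}$ according to whether $v'$ is correctly labeled by $\hat g^{(1)}$, I would decompose
\begin{equation*}
\hat h_{v,k} = \psi_v\alpha_n B_0(g_v,k)\,\bar\psi_k + \Delta^{\mathrm{lab}}_{v,k} + \Delta^{\mathrm{noise}}_{v,k},
\end{equation*}
where $\bar\psi_k$ is the empirical average of $\psi_{v'}$ over correctly labeled members of community $k$ in $\mathcal V_1$. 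The label-error term is $O(\alpha_n/f(|\mathcal V_1|\alpha_n))$ by A3 and A2, and a Bernstein bound plus a union bound over $(v,k)\in\mathcal V_2\times[K]$ gives $\Delta^{\mathrm{noise}}_{v,k}=O(\alpha_n\sqrt{\log n/(n\alpha_n)})$ uniformly in $v$. Both errors can be made an arbitrarily small multiple of $\alpha_n$ by taking the constant $C$ in $\alpha_n\ge C\log n/n$ large enough.

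After dividing by $\alpha_n$ this says that $\hat h_v/\alpha_n=\psi_v\,b_{g_v}+\epsilon_v$ in $\mathbb R^K$, where $b_k:=(\bar\psi_1 B_0(k,1),\dots,\bar\psi_K B_0(k,K))$ and $\|\epsilon_v\|$ can be made arbitrarily small uniformly in $v$. Spherical normalization cancels the scalar factor $\psi_v$, so each $\hat h_v/\|\hat h_v\|$ is close to $b_{g_v}/\|b_{g_v}\|$. To conclude via the minimum spanning tree subroutine, I need a constant lower bound on the between-cluster separation $\min_{k\ne k'}\bigl\|b_k/\|b_k\|-b_{k'}/\|b_{k'}\|\bigr\|$. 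This follows from A1' because $b_k$ is the $k$th row of $B_0$ multiplied by the positive diagonal matrix $D=\mathrm{diag}(\bar\psi_1,\dots,\bar\psi_K)$, and by A4 together with $\pi_0$-properness each $\bar\psi_k$ lies in $[\psi_0,1]$, making the map $w\mapsto Dw/\|Dw\|$ bi-Lipschitz on the unit sphere with constants depending only on $\psi_0$. The MST subroutine then separates the clusters perfectly, exactly as in \Cref{lem:crossclust}.

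The main new obstacle, relative to the plain SBM case, is handling the spherical normalization step. One must establish a deterministic, uniform-in-$v$ lower bound on $\|\hat h_v\|$, which needs both $\psi_v\ge\psi_0$ (A4) and concentration of $\hat h_v$ about its mean, and one must also verify that A1' really yields a constant between-cluster separation after the $D$-scaling. The union bound over $v\in\mathcal V_2$ in the concentration step is what forces the sparsity threshold $\alpha_n\ge C\log n/n$. Both points are short continuity and concentration arguments, but they are exactly what distinguishes the DCBM argument from the SBM one and what makes Assumptions A1' and A4 necessary.
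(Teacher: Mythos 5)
Your proposal is correct and follows essentially the same route as the paper: the theorem is reduced to an accuracy lemma for \textsf{CrossClustSphere} (the paper's \Cref{lem:crossclustsphere}) applied twice along the structure of \textsf{SplitClust}, and your sketch of that lemma — decomposing $\hat h_{v,k}$ into a label-error term and a Bernstein-controlled noise term, lower-bounding $\|\hat h_v\|$ via A4, and showing that the diagonal $\psi$-rescaling preserves a constant separation between normalized rows of $B_0$ — matches the paper's \Cref{lem:DChvbound} and \Cref{lem:DCrowdiff}. The only cosmetic difference is that the paper centers $\hat h_v$ at $\psi_v\tilde B(g_v,\cdot)$ with $\tilde B$ averaging $\psi_{v'}$ over all true community members in $\mathcal V_1$ rather than only the correctly labeled ones, which changes nothing substantive.
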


The proof of \Cref{thm:main-dcbm} is similar to that
of \Cref{thm:main-sbm}, and uses the following analogous result of \Cref{lem:crossclust}.  Detailed proofs of both results are given in
\Cref{sec:proof-dcbm}.

%% lemma: crossclustsphere
\begin{lemma}[Accuracy of \textsf{CrossClustSphere}]\label{lem:crossclustsphere}
  Let $A$ be an adjacency matrix generated by a degree corrected block model satisfying
  the assumptions of \Cref{thm:main-dcbm}, and $\mathcal V_1$ be a 
  subset with $\pi_0$-proper membership vector $g^{(1)}$. Let
  $\hat g^{(1)}$ be a membership vector on $\mathcal V_1$ independent of the edges between
  $\mathcal V_1$ and $\mathcal V_2$, with recovery error at most $|\mathcal V_1|/f( 
  |\mathcal V_1|\alpha_n)$. 
   There exists a constant $C$ such that if $\alpha_n\ge C\log n / n$, then with high probability, $\hat g^{(2)}$, the output of \textsf{CrossClustSphere}
  (Algorithm 1'), satisfies
  $\hat g^{(2)}=g^{(2)}$.
\end{lemma}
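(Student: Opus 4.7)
The plan is to mirror the proof of \Cref{lem:crossclust} with an extra geometric step to handle the spherical normalization that cancels the degree-heterogeneity parameter $\psi$. The four main steps are: (i) expand $\hat h_{v,k}$ as a deterministic ``ideal'' vector $\tilde h_v$ plus error; (ii) use Bernstein's inequality together with the independence granted by sample splitting to bound the error uniformly in $v\in\mathcal V_2$; (iii) show that the normalized ideal vectors $\tilde h_v/\|\tilde h_v\|$ depend only on $g_v$ (not on $\psi_v$) and that their $K$ distinct values are separated by at least a positive constant; (iv) invoke the standard fact that minimum-spanning-tree clustering recovers exactly when the within-cluster diameter is less than a quarter of the between-cluster separation.

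For the decomposition, fix $v\in\mathcal V_2$ and condition on $\hat g^{(1)}$, which is independent of the edges between $\mathcal V_1$ and $\mathcal V_2$ by hypothesis. Let $S_k=\{v'\in\mathcal V_1:\hat g^{(1)}_{v'}=k\}$. Since $\hat g^{(1)}$ disagrees with $g^{(1)}$ at no more than $|\mathcal V_1|/f(|\mathcal V_1|\alpha_n)=o(|\mathcal V_1|)$ nodes, Assumption A2 gives $|S_k|\ge (\pi_0/2)|\mathcal V_1|$ for large $n$, and the conditional mean of $\hat h_{v,k}$ differs from
\[
\tilde h_{v,k}:=\psi_v\alpha_n B_0(g_v,k)\,\bar\psi_k^{(1)},\qquad \bar\psi_k^{(1)}:=\frac{1}{n_k^{(1)}}\sum_{v'\in\mathcal V_1:\,g_{v'}=k}\psi_{v'},
\]
by at most $O(\alpha_n/f(n\alpha_n))$, coming from the misclassified nodes in $\mathcal V_1$. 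In vector form $\tilde h_v=\psi_v\alpha_n D^{(1)}B_0(g_v,\cdot)^\top$ with $D^{(1)}=\mathrm{diag}(\bar\psi_1^{(1)},\ldots,\bar\psi_K^{(1)})$. Applying Bernstein's inequality to $\sum_{v'\in S_k}(A_{v,v'}-E A_{v,v'})$, whose summands are mutually independent thanks to the split, and union-bounding over $v\in\mathcal V_2$ and $k\in[K]$, we obtain $|\hat h_{v,k}-\tilde h_{v,k}|=O(\alpha_n\sqrt{\log n/(n\alpha_n)})$ uniformly with probability $1-O(n^{-1})$ as soon as $\alpha_n\ge C\log n/n$ for $C$ large.

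The spherical step is where the novelty lies. The normalized ideal vector $\tilde h_v/\|\tilde h_v\|$ equals $D^{(1)}B_0(g_v,\cdot)^\top/\|D^{(1)}B_0(g_v,\cdot)^\top\|$, which is independent of $\psi_v$ and depends on $v$ only through $g_v$. I would then show that for $k\ne k'$ the corresponding unit vectors differ by at least a positive constant $\tilde\gamma'=\tilde\gamma'(\tilde\gamma,\psi_0,B_0)$: since $D^{(1)}$ is an invertible diagonal map with entries in $[\psi_0,1]$, the two rays $\{tB_0(k,\cdot)^\top:t>0\}$ and $\{tB_0(k',\cdot)^\top:t>0\}$, which are distinct by A1', map to distinct rays, and a compactness argument on the closed set of admissible $D^{(1)}$ yields a uniform lower bound on the angular separation. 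Combining this with the concentration bound shows $\|\hat h_v/\|\hat h_v\|-\tilde h_v/\|\tilde h_v\|\|=o(1)$ uniformly, so the $K$ normalized point clouds $\{\hat h_v/\|\hat h_v\|:v\in\mathcal V_2,\,g_v=k\}$ have diameter much smaller than the between-cluster distance $\tilde\gamma'$. Minimum-spanning-tree clustering therefore outputs the exact partition of $\mathcal V_2$, yielding $\hat g^{(2)}=g^{(2)}$ up to relabelling.

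The main obstacle is the geometric step: unlike in \Cref{lem:crossclust}, where the unnormalized centers are separated linearly by $\psi_0\alpha_n\gamma$, here the diagonal rescaling by $D^{(1)}$ can distort angles, and A1' must be propagated into a quantitative angular lower bound that is uniform over admissible $\bar\psi_k^{(1)}\in[\psi_0,1]$; one must then show this lower bound survives the random perturbation after normalization of small-magnitude vectors (since $\|\tilde h_v\|\asymp\alpha_n$ is small, normalization amplifies errors by a factor of $\alpha_n^{-1}$, which is precisely why the concentration rate $\alpha_n\sqrt{\log n/(n\alpha_n)}$ just suffices). Once this is handled, the remainder of the argument follows the same Bernstein/union-bound template as in the stochastic block model case.
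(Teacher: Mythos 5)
Your proposal is correct and follows essentially the same route as the paper: you identify the same ideal vector $\psi_v\alpha_n D^{(1)}B_0(g_v,\cdot)^\top$ (the paper's $\psi_v\tilde B(g_v,\cdot)$), control $\hat h_v$ around it via Bernstein's inequality conditionally on $\hat g^{(1)}$, observe that normalization cancels $\psi_v$, establish a uniform angular separation of the normalized rows over diagonal scalings with entries in $[\psi_0,1]$, and finish with the MST separation criterion. The only differences are cosmetic: the paper proves the angular separation explicitly (obtaining the constant $\psi_0\tilde\gamma$ by minimizing $\|u/s-v/t\|$ over $s,t\in[\psi_0,1]$) where you invoke compactness, and it organizes the error into three telescoping terms rather than centering at the conditional mean.
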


\section{Discussion}
\label{sec:discussion}
%!TEX root = ./exact_rec_tr.tex
\paragraph{Dependence on other model parameters.} 
In this paper we investigate exact community recovery with a special focus on the
overall network sparsity (e.g., the rate at which $\alpha_n$ vanishes).
In the study of stochastic block models, the effect of other model parameters, such as 
the number of communities and
community size imbalance may also be of interest.
  The arguments
developed in this paper, for example, \Cref{lem:individualhv},
do keep track of these parameters and hence can be used to study 
the more general scenario where these parameters are also allowed to change 
with $n$ non-trivially.   
In particular, when $K=2$ with $\mathcal I_1$, $\mathcal I_2$
being the two communities,
one can show that the sample splitting approach succeeds with high probability
when $\alpha_n=\Omega(1)$ and $\min(|\mathcal I_1|,|\mathcal I_2|)\ge C\sqrt{n}$
for large enough $C$.

\paragraph{Extension to V-fold cross clustering}  Our main algorithm, Algorithm 2,
splits the nodes into two subsets.
Practically, halving the set of nodes may result in a non-negligible loss
of accuracy in the initial estimation of $g^{(1)}$.
 This issue can be mitigated by extending the sample splitting to a 
V-fold cross clustering.  Given positive integer $V\ge 2$, we can split the nodes
into $V$ subsets, $\mathcal V_1,...,\mathcal V_V$ of equal size.  For each $j=1,...,V$,
let $\mathcal V_{-j}=\cup_{j'\neq j} \mathcal V_{j'}$.  Then we can obtain a
preliminary community estimate $\hat g^{(-j)}$ for $\mathcal V_{-j}$, and then apply
\textsf{CrossClust} to $(A, \mathcal V_{-j}, \mathcal V_j, \hat g^{(-j)})$ to obtain 
community estimate $\hat g^{(j)}$ for $\mathcal V_j$.  Our theoretical
results can be extended to this case, suggesting that
$\hat g^{(j)}=g^{(j)}$ with high probability under appropriate conditions.
After obtained $\hat g^{(j)}$ for all $1\le j\le V$, one can merge these membership 
vectors by matching the edge frequencies between communities in different
subsets.  It has been empirically observed that such V-fold cross clustering with $V> 2$
usually outperforms the sample splitting method.

\paragraph{Open problems} The only step in our proof that requires sample splitting
is the large deviation bound for the term $T_1$ in the proofs of
\Cref{lem:individualhv,lem:DChvbound}, where the summation of $A_{v,v'}$ is over a random set
$\{v'\in\mathcal V_1:\hat g^{(1)}_{v'}\neq g^{(1)}_{v'}\}$.  The sample splitting
makes the summand $A_{v,v'}$ independent of this index set, allowing us to condition on the
index set and apply
Bernstein's inequality.  As mentioned earlier in \Cref{sec:prelim}, a natural 
alternative is to obtain a preliminary community estimate for the entire set of nodes,
and then cross cluster each node using this preliminary community partition.  
That is,
we use $\mathcal V_1=\mathcal V_2=[n]$ in Algorithm 1.  
Such a self-cross clustering approach, although 
hard to analyze, gives
very competitive practical performance.  As an example,
consider the political blog data \citep{PolBlog}, where the edges represent
hyperlinks among 1222 weblogs on U.S. politics in 2004, and two communities
are recognized as ``liberal'' and ``conservative'', respectively.  
It is widely believed that
the degree corrected block model with $K=2$ fits the data well 
\citep{ZhaoLZ12,Jin12,Yan14}.
The self-cross clustering method makes about 10\% fewer errors than both the simple
spherical spectral clustering \citep{LeiR14} and the V-fold cross clustering considered
in this paper.  It would be an interesting open problem to provide rigorous performance
guarantee for the self-cross clustering method.

\section{Proof of main results}
\label{sec:proof}
%!TEX root = ./exact_rec_tr.tex
We introduce several more notations. 
For any $i\in\{1,2\}, k \in \{1,2,...,K\}$, we denote
\begin{equation*}% \label{def:I_k} 
 \mathcal I_k = \left\{ v: g_v = k \right\}\,, \ 
 \mathcal I_k^{(i)} = \left\{ v: v \in \mathcal V_i, \, g_v^{(i)} = k \right\}\,, \  
 \hat{\mathcal I}_{k}^{(i)} = \left\{ v: v \in \mathcal V_i, \, \hat g_v^{(i)}  = k \right\}\,.
\end{equation*}
% In addition, for every node $v \in \mathcal V$, we denote $\hat{h}_v = (\hat{h}_{v,1}, ..., \hat{h}_{v,K}), h_v = (h_{v,1}, ..., h_{v,K})$, where
% \begin{equation}%\label{def:h_vk}
%  \hat{h}_{v,k} = \frac{ \sum_{ v' \in \hat{\mathcal I}_k^{(1)} } A_{v,v'} }{ |\hat{\mathcal I}_k^{(1)}| } , \ h_{v,k} = \frac{ \sum_{ v' \in \mathcal I_k^{(1)} } A_{v,v'} }{ | \mathcal I_k^{(1)}| }\,.
%  \end{equation}
Usually the true membership $g$ and estimated membership $\hat g$ agree on most 
entries up to
a label permutation. For simplicity we will assume, without loss of generality, that the
permutation is identity.

 \subsection{Proofs for stochastic block models}\label{sec:proof-sbm}
 The main theorem for stochastic block models follows from two simple applications
 of the accuracy of cross clustering (\Cref{lem:crossclust}).
\begin{proof}[Proof of \Cref{thm:main-sbm}]
Note that
\[
 \mathbb{P} \left( \hat{g} = g \right) \geq \mathbb{P} \left( \hat{g}=g, \hat{g}^{(2)}  = g^{(2)} \right) = \mathbb{P} \left( \hat{g} = g \middle| \hat{g}^{(2)} = g^{(2)} \right) \mathbb{P} \left( \hat{g}^{(2)} = g^{(2)} \right)\,.
\]

By \Cref{lem:propersubset}, $g^{(1)}$ is $\pi_0/3$-proper with high probability. Then by  \Cref{lem:crossclust}, when $\alpha_n \ge C \log n / n$ for some constant $C$, $\hat{g}^{(2)} = g^{(2)}$ with high probability.

Finally, when $\hat{g}^{(2)} = g^{(2)}$, $\hat{g}^{(2)}$ is $\pi_0/3$-proper with high
probability according to \Cref{lem:propersubset}. Then by \Cref{lem:crossclust} again, $\hat{g} = g$ with high probability when $\alpha_n > C' \log n / n$ for another large enough constant $C'$.
\end{proof}

\begin{proof}[Proof of \Cref{lem:crossclust}]
Suppose $\alpha_n = a \log n / n$ for some constant $a$.
If for all nodes $v \in \mathcal V_2$, for some constant $\delta>0$,
\begin{equation}\label{eq:h-error}
\|\hat{h}_v - B(g_v, \cdot) \| \leq \delta \frac{\log n}{n}\,,
\end{equation}
then we have the following separation conditions
 \[\begin{split}
 & \sup_{v,v' \in \mathcal V_2, g_v = g_{v'}} \| \hat{h}_v - \hat{h}_{v'}  \| \leq 2 
 \delta \frac{\log n}{n}\,,\\
 &  \inf_{v,v' \in \mathcal V_2, g_v \neq g_{v'}} \| \hat{h}_v - \hat{h}_{v'}  \| \geq 
 \inf_{1 \leq k < k' \leq K} \| B(k, \cdot) - B(k', \cdot)\| - 2 \delta \frac{\log 
 n}{n}  \geq ( a \gamma - 2 \delta) \frac{\log n}{n}\,.
 \end{split} \]
The distance based clustering subroutine $\mathcal D$ used in Algorithm 1, such
as the minimum spanning tree, can correctly cluster all nodes, i.e., $\hat{g}^{(2)} = 
g^{(2)}$, if
 \[ \sup_{v,v' \in \mathcal V_2, g_v = g_{v'}} \| \hat{h}_v - \hat{h}_{v'}  \|  <  
 \inf_{v,v' \in \mathcal V_2, g_v \neq g_{v'}} \| \hat{h}_v - \hat{h}_{v'}  \| \,. \]

Therefore, it suffices to show that with high probability, $ \| \hat{h}_v - B(g_v, 
\cdot) \| \leq \delta \frac{\log n}{n}$ for all $v \in \mathcal V_2$, where 
\begin{equation} 0 < \delta < \frac{a \gamma}{4}  \,. \label{ineq:delta}\end{equation}

 By \Cref{lem:individualhv}, the approximation bound \eqref{eq:h-error}
 and inequality \eqref{ineq:delta} hold
 with high probability
 if we choose $q$ in \Cref{lem:individualhv} 
 to be a constant larger than $4/\gamma$.
\end{proof}

The following lemma establishes our key observation that $\hat{h}_v$ should be close to the corresponding row of connectivity matrix, $B(g_v, \cdot)$. 

%% lemma individualhv
\begin{lemma}\label{lem:individualhv}
Given $\mathcal V_1$, $\mathcal V_2$, $g^{(1)}$, and $\hat g^{(1)}$ satisfying the conditions of \Cref{lem:crossclust} with $\alpha_n=q \delta \log n /n$. For any $q>1$, there exists $\delta_0=\delta_0(\pi_0,c_0,K,q)$, 
such that for all $\delta\ge \delta_0$
\[\mathbb{P} \left(\| \hat{h}_v - B(g_v, \cdot) \|\le \delta \frac{\log n}{n},~~\forall~v\in\mathcal V_2\right) =1-O(n^{-1})\,, \]
where the probability is conditional on  $\mathcal V_1$, $\mathcal V_2$ and $\hat g^{(1)}$.
 
\end{lemma}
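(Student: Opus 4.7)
The plan is to bound each coordinate $|\hat h_{v,k} - B(g_v, k)|$ for fixed $v \in \mathcal V_2$ and $k \in [K]$ with probability at least $1 - O(n^{-3})$, then take a union bound over the $O(n)$ such pairs. Throughout I condition on $\mathcal V_1, \mathcal V_2, \hat g^{(1)}$, so that the sets $\hat{\mathcal I}_k^{(1)}$ and the symmetric-difference sets $E_k = \hat{\mathcal I}_k^{(1)} \triangle \mathcal I_k^{(1)}$ are deterministic; the remaining randomness lies in the edges $\{A_{v,v'}: v' \in \mathcal V_1\}$, which by sample splitting are independent of $\hat g^{(1)}$. Writing $N_k = |\mathcal I_k^{(1)}|$ and $\hat N_k = |\hat{\mathcal I}_k^{(1)}|$, the $\pi_0$-properness hypothesis gives $N_k \ge \pi_0 n$, and the error bound $|E_k| \le |\mathcal V_1|/f(|\mathcal V_1|\alpha_n)$ together with $f \uparrow \infty$ gives $|\hat N_k - N_k| \le |E_k| = o(n)$, so $\hat N_k \ge \pi_0 n/2$ for $n$ large.

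The key decomposition is
\[
\hat h_{v,k} - B(g_v, k) \;=\; \frac{1}{\hat N_k}\Bigl[\bigl(S_v - N_k B(g_v,k)\bigr) + T_1 - T_2 - \bigl(\hat N_k - N_k\bigr)B(g_v,k)\Bigr],
\]
where $S_v = \sum_{v' \in \mathcal I_k^{(1)}} A_{v,v'}$, $T_1 = \sum_{v' \in \hat{\mathcal I}_k^{(1)} \setminus \mathcal I_k^{(1)}} A_{v,v'}$, and $T_2 = \sum_{v' \in \mathcal I_k^{(1)} \setminus \hat{\mathcal I}_k^{(1)}} A_{v,v'}$. I would show each of the four contributions inside the bracket is $O(\log n)$ with probability $1 - O(n^{-3})$. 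The centered main term $S_v - N_k B(g_v,k)$ is a sum of independent Bernoullis with variance at most $\alpha_n N_k = O(\log n)$, so Bernstein's inequality with deviation $C \log n$ gives a tail of $n^{-\Theta(C)}$, which can be made $\le n^{-3}$ by taking $C$ large. The error-set sums $T_1, T_2$ have expectation at most $\alpha_n |E_k| = q\delta \log n / f(|\mathcal V_1|\alpha_n) = o(\log n)$ and variance of the same order, so Bernstein again yields $T_i = O(\log n)$ with probability $1 - O(n^{-3})$. The deterministic offset is bounded by $|\hat N_k - N_k|\,\alpha_n \le \alpha_n |E_k| = o(\log n)$. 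Dividing by $\hat N_k = \Omega(n)$ then gives $|\hat h_{v,k} - B(g_v,k)| \le C' \log n / n$; choosing $\delta_0$ large enough so that $\delta \ge \delta_0$ absorbs $C'$ (together with the $\sqrt{K}$ factor needed to pass from a coordinatewise bound to an $\ell_2$ bound) yields $\|\hat h_v - B(g_v,\cdot)\| \le \delta \log n / n$, and a union bound over the $O(n)$ pairs $(v,k)$ closes the argument.

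The main obstacle is the control of the error-set sums $T_1, T_2$. Without sample splitting the index set $E_k$ would be a complicated function of the entire adjacency matrix, including the very edges being summed in $T_1$ and $T_2$, and a direct concentration bound would fail because of the dependence between the summands and the index set. Sample splitting is exactly what makes this clean: since the edges $A_{v,v'}$ with $v \in \mathcal V_2, v' \in \mathcal V_1$ are independent of $\hat g^{(1)}$, conditioning on $\hat g^{(1)}$ turns $E_k$ into a fixed set and $T_1, T_2$ into genuine sums of independent Bernoullis to which Bernstein applies. A secondary technical point is the calibration of the Bernstein constants so that each individual tail is $O(n^{-3})$ and the $O(n)$-term union bound still gives the advertised $1 - O(n^{-1})$ probability; the freedom to take $\delta \ge \delta_0(\pi_0, c_0, K, q)$ provides the necessary slack.
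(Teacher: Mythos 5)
Your proposal is correct and follows essentially the same route as the paper: condition on the split so that the index sets are deterministic, decompose $\hat h_{v,k}-B(g_v,k)$ into a centered main sum, error-set sums over $\hat{\mathcal I}_k^{(1)}\triangle\mathcal I_k^{(1)}$, and a set-size discrepancy term, bound each by Bernstein's inequality, and union bound over $(v,k)$. The only (cosmetic) difference is that you handle the denominator mismatch via the deterministic offset $(\hat N_k-N_k)B(g_v,k)$, whereas the paper's corresponding term $T_2$ keeps the random sum $\sum_{v'\in\mathcal I_k^{(1)}}A_{v,v'}$ and bounds it with one more application of Bernstein; your version is marginally cleaner but the argument is the same.
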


\begin{proof} Let $a=q\delta$.
Because
\[ \mathbb{P} \left(\| \hat{h}_v - B(g_v, \cdot) \| \ge \delta \frac{\log n}{n} \right) 
\leq \sum_{k=1}^K \mathbb{P} \left( \left| \hat{h}_{v,k} - B(g_v, k) \right| \ge \frac{ 
\delta}{\sqrt{K}} \frac{\log n}{n} \right)\,, \]
it suffices to bound all $K$ coordinates individually.
Now, for any $k \in \{1,2,.., K\}$, since $ \lim_{n \to \infty} f(\pi_0 a \log n) = \infty$, when $n$ is large enough, we have
\begin{align}\label{ineq:I_kbound} 
&  \sum_{l \neq k}\left| \hat{\mathcal I}_{l}^{(1)} \cap \mathcal I_k^{(1)} \right| \leq |\{v':\hat g^{(1)}_{v'}\neq g^{(1)}_{v'}\}|\le \frac{ |\mathcal V_1| }{f(|\mathcal V_1|\alpha_n)} \leq \frac{n}{f(\pi_0 a \log n)}\,. \\
 & \left| \hat{ \mathcal I}_k^{(1)} \right| \geq \left|  \mathcal I_k^{(1)}  \right| - \sum_{l \neq k}\left| \hat{\mathcal I}_{l}^{(1)} \cap \mathcal I_k^{(1)} \right| \geq \left[ \pi_0 - \frac{1}{f (\pi_0 a \log n) }  \right] n \geq \pi_0 n / 2\,. \label{eq:I_hat_bound}
\end{align}
For any $1\le k\le K$, we have
\[\begin{split}
\left| \hat{h}_{v,k} -  B(g_v, k) \right| & \leq \left| \frac{ \sum_{ v' \in 
\hat{\mathcal I}_k^{(1)} } A_{v,v'} - \sum_{v' \in \mathcal I_k^{(1)}} A_{v, v'} }{ 
|\hat{\mathcal I}_k^{(1)}| } \right| + \left| \frac{ \sum_{ v' \in \mathcal I_k^{(1)} 
} A_{v,v'} }{ |\hat{\mathcal I}_k^{(1)}| } -  \frac{ \sum_{ v' \in \mathcal I_k^{(1)} 
} A_{v,v'} }{ |\mathcal I_k^{(1)}| } \right| \\ 
& \qquad + \left|  \frac{ \sum_{ v' \in \mathcal I_k^{(1)} } A_{v,v'} }{ |\mathcal 
I_k^{(1)}| }  - B(g_v, k) \right| \\
& \leq \frac{ \left| \sum_{ v' \in \hat{\mathcal I}_k^{(1)} } A_{v,v'} - \sum_{v' \in \mathcal I_k^{(1)}} A_{v, v'} \right| }{ \pi_0 n/2 } + \frac{ \left|  | \mathcal I_k^{(1)}| -  |\hat{\mathcal I}_k^{(1)}|  \right| }{ \pi_0^2 n^2 /2 }  \sum_{ v' \in \mathcal I_k^{(1)} } A_{v,v'} \\
& \qquad +  \left|  \frac{ \sum_{ v' \in \mathcal I_k^{(1)} } A_{v,v'} }{ |\mathcal I_k^{(1)}| }  - B(g_v, k) \right|  \\
& = T_1+T_2+T_3\,.
\end{split}\]
Thus
\[
 \mathbb{P}  \left(  \left| \hat{h}_{v,k} - B(g_v, k) \right| \ge \frac{ \delta}{\sqrt{K}} \frac{\log n}{n}  \right) 
  \leq\sum_{j=1}^3 \mathbb{P} \left( T_j \ge \frac{\delta}{3\sqrt{K}} \frac{\log n}{n} \right)\,.\]
% for large $n$'s, for some constants $\eta, \delta', \delta''>0, \delta'+\delta''<\frac{\delta}{\sqrt{K}}$.
Now we only need to bound the three terms individually. 
First by the assumption on $\hat g^{(1)}$ we know
$|\{v': \hat g^{(1)}_{v'}\neq g^{(1)}_{v'}\}|\le n/f(\pi_0 a \log n)$.
Then using Bernstein's inequality we have for $n$ large enough,
%The following proofs are all based on the fact that $A_{v,v'} \sim \textrm{Bernoulli}( B(g_v, g_{v'}) )$ independently.
\[\begin{split}
\mathbb{P} \left( T_1 \ge \frac{\delta}{3\sqrt{K}} \frac{\log n}{n} \right) & \leq \mathbb{P} \left( \sum_{v': \hat g^{(1)}_{v'}\neq g^{(1)}_{v'}} A_{v, v'} \ge 
\frac{\pi_0 n}{2}\frac{\delta}{3\sqrt{K}} \frac{\log n}{n} \right)  \\
& \leq n^{-\frac{c_1^2(\pi_0,K)\delta^2}{a/f(\pi_0 a \log n)+c_1(\pi_0,K)\delta/3}}\le n^{-c_1(\pi_0,K)\delta}\,,
\end{split}\]
where $c_1(\pi_0,K)$ is a positive constant depending only on $\pi_0$ and $K$.

 % Note that $ \left|  | \mathcal I_k^{(1)}| -  |\hat{\mathcal I}_k^{(1)}|  \right| \leq \textrm{recovery error} \leq \frac{ |\mathcal V_1| }{f(c_0 n \alpha_n)}$, so
To control $T_2$, similarly we have for $n$ large enough using Bernstein's
inequality
\[\begin{split}
\mathbb{P} &\left( T_2 \ge \frac{\delta}{3\sqrt{K}} \frac{\log n}{n} \right) \leq \mathbb{P} \left( 
  \sum_{ v' \in \mathcal I_k^{(1)} } A_{v,v'} \ge
  \frac{\pi_0^2 n^2}{2\left|  | \mathcal I_k^{(1)}| -  |\hat{\mathcal I}_k^{(1)}|  \right|}  \frac{\delta}{3\sqrt{K}} \frac{\log n}{n}\right) \\
& \qquad \le  \mathbb{P} \left( 
  \sum_{ v' \in \mathcal I_k^{(1)} } A_{v,v'} \ge
  \frac{\pi_0^2 \delta \log n f(\pi_0a\log n)}{6\sqrt{K}}  \right) \\
& \qquad \leq n^{-\frac{c_2^2(\pi_0,K)f^2(\pi_0 a\log n)\delta^2}{a+c_2(\pi_0,K)f(\pi_0 a\log n)\delta/3}}\le n^{-c_2(\pi_0,K)\delta}\,,
\end{split}\]
where $c_2(\pi_0,K)$ is another positive constant function of $\pi_0$ and $K$.

 Directly applying Bernstein's inequality to $T_3$, we have
\[\begin{split}
 \mathbb{P} \left( T_3  \ge \frac{\delta}{3\sqrt{K}} \frac{\log n}{n} \right) &  
 \le \mathbb{P} \left( \bigg|\sum_{ v' \in \mathcal I_k^{(1)} } [ A_{v,v'}  - B(g_v, k) ]\bigg|\ge  \frac{\delta}{3\sqrt{K}}\frac{\log n}{n} \pi_0 n \right) \\
 & \leq 2 n^{-c_3(\pi_0,K,q)\delta} \,,
 \end{split}\,,\]
where $c_3(\pi_0,K,q)$ is a positive constant function independent of $n$.

The claimed result follows by choosing $\delta$ large enough and applying union
bound over all $v\in\mathcal V_2$.
\end{proof}

We conclude this subsection by establishing probability lower bounds of
having proper subsets under random splitting.
\begin{lemma}[Probability of having proper split subsets]\label{lem:propersubset}
If the true membership vector $g$ on $[n]$ is $\pi_0$ proper, and $[n]$ is 
randomly split into two equal-sized subsets $\mathcal V_1, \mathcal V_2$
with corresponding $g^{(1)}$, $g^{(2)}$. Then $g^{(1)}$ and 
$g^{(2)}$ are $\pi_0/3$-proper with high probability.
\end{lemma}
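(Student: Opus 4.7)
The plan is to treat the split counts as hypergeometric random variables and apply a concentration inequality to each community separately, then union bound.

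First I would set up the basic bookkeeping. Fix a community $k$ and let $X_k = |\mathcal I_k \cap \mathcal V_1|$. Since the split picks a uniformly random size-$n/2$ subset of $[n]$, the random variable $X_k$ has a hypergeometric distribution with population size $n$, number of successes $|\mathcal I_k|$, and sample size $n/2$, so $\mathbb{E} X_k = |\mathcal I_k|/2 \ge \pi_0 n/2$ by the $\pi_0$-proper assumption on $g$. Symmetrically, $|\mathcal I_k \cap \mathcal V_2| = |\mathcal I_k| - X_k$ is also hypergeometric with mean $\ge \pi_0 n/2$.

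Next I would apply a concentration bound for sampling without replacement (e.g.\ Hoeffding's inequality for the hypergeometric, or Serfling's bound):
\[
\mathbb{P}\bigl(X_k \le \mathbb{E} X_k - t\bigr) \le \exp\!\bigl(-2t^2/|\mathcal I_k|\bigr).
\]
Choosing $t = \mathbb{E} X_k - \pi_0 n/3 \ge \pi_0 n/6$ gives $\mathbb{P}(X_k < \pi_0 n/3) \le \exp(-\pi_0^2 n / 18)$, which is vastly smaller than $n^{-1}$. The same bound applies to $|\mathcal I_k \cap \mathcal V_2|$ by the analogous argument. A union bound over the $2K$ events (two subsets, $K$ communities) then gives failure probability $\le 2K \exp(-\pi_0^2 n / 18) = O(n^{-1})$ since $K$ is a constant under Assumption A1.

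There is no real obstacle here; the only small care needed is to choose a concentration inequality valid for sampling without replacement rather than independent Bernoullis, since the equal-split constraint introduces negative dependence among the indicator variables $\mathbbm 1\{i\in\mathcal V_1\}$. Hoeffding's inequality for the hypergeometric distribution (or equivalently the Chernoff bound for sums of negatively associated indicators) handles this cleanly. The constants $1/3$ and $1/2$ are not tight, and any constant strictly less than $\pi_0/2$ would work as a properness parameter for the split pieces; the choice $\pi_0/3$ in the lemma just leaves a comfortable slack of $\pi_0 n/6$ to absorb into the exponent.
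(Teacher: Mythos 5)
Your proof is correct and follows essentially the same route as the paper: both identify $|\mathcal I_k\cap\mathcal V_i|$ as hypergeometric with mean at least $\pi_0 n/2$, apply an exponential tail bound for sampling without replacement with deviation $\pi_0 n/6$ to obtain $e^{-\pi_0^2 n/18}$, and conclude by a union bound over the finitely many communities and the two subsets.
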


\begin{proof}
By definition we have $|\mathcal V_1|=|\mathcal V_2|\ge n/2$.
The claimed result follows easily from an exponential tail probability bound for
hypergeometric random variables \citep[see, e.g.,][]{Skala13}, for $i=1,2$,
\[ \mathbb{P} \left(   | \mathcal I_k^{(i)} | < \pi_0 n/3 \right) \leq  
\mathbb P \left( | \mathcal I_k^{(i)} |  - \mathbb{E} | \mathcal I_k^{(i)} | < - \pi_0 n/6 \right)  \leq e^{ - \pi_0^2 n/18} \,. \]
\end{proof}

\subsection{Proofs for degree corrected block models}\label{sec:proof-dcbm}
In the following proofs, we denote $\tilde{B}$ as the $K \times K$ weighted connectivity matrix, where
 \begin{equation}\label{def:tildeB}
  \tilde{B}(i, j) =  \frac{ \sum_{v' \in \mathcal I_j^{(1)}} \psi_{v'}}{ | \mathcal I_j^{(1)} | } B(i,j)\,.
  \end{equation}

\begin{proof}[Proof of \Cref{thm:main-dcbm}]
Note that
\[
 \mathbb{P} \left( \hat{g} = g \right) \geq \mathbb{P} \left( \hat{g}=g, \hat{g}^{(2)}  = g^{(2)} \right) = \mathbb{P} \left( \hat{g} = g \middle| \hat{g}^{(2)} = g^{(2)} \right) \mathbb{P} \left( \hat{g}^{(2)} = g^{(2)} \right) \,.
\]

By Lemma \ref{lem:propersubset}, $g^{(1)}$ is $\pi_0/3$-proper with high probability. Then by Lemma \ref{lem:crossclustsphere}, when $\alpha_n > C \log n / n$ for some constant $C$, $\hat{g}^{(2)} = g^{(2)}$ with high probability. Therefore, $\hat{g}^{(2)} = g^{(2)}$ with high probability.

Finally, when $\hat{g}^{(2)} = g^{(2)}$, it is $\pi_0/3$-proper with high probability according to \Cref{lem:propersubset}. Then by Lemma \ref{lem:crossclustsphere} again, $\hat{g} = g$ with high probability when $\alpha_n > C' \log n / n$ for another large enough constant $C'$.
\end{proof}

\begin{proof}[Proof of \Cref{lem:crossclustsphere}]
Suppose $\alpha_n = a \log n / n$ for some constant $a$. If  for all nodes $v \in \mathcal V_2$, for some constant $\delta>0$,
\begin{equation}\label{eq:DCh-error}
 \left\| \frac{\hat{h}_v}{ \| \hat{h}_v \| } - \frac{ \tilde{B}(g_v, \cdot) }{ \| \tilde{B}(g_v, \cdot)\|} \right\| \leq \delta\,,
 \end{equation}
then we have the following separation conditions
 \[\begin{split}
 & \sup_{v,v' \in \mathcal V_2, g_v = g_{v'}} \left\| \frac{\hat{h}_v}{ \| \hat{h}_v \| } -\frac{ \hat{h}_{v'} }{ \| \hat{h}_{v'} \| }\right\| \leq 2 \delta \,, \\
 &  \inf_{v,v' \in \mathcal V_2, g_v \neq g_{v'}} \left\| \frac{\hat{h}_v}{ \| \hat{h}_v \| } - \frac{ \hat{h}_{v'} }{ \| \hat{h}_{v'} \| } \right\| \geq \inf_{1 \leq k < k' \leq K} \left\| \frac{ \tilde{B}(k, \cdot) }{ \| \tilde{B}(k, \cdot)\|} - \frac{ \tilde{B}(k', \cdot) }{ \| \tilde{B}(k', \cdot)\|} \right\| - 2 \delta \,. 
 \end{split} \]
We know from \Cref{lem:DCrowdiff} that
\[ \inf_{1 \leq k < k' \leq K} \left\| \frac{ \tilde{B}(k, \cdot) }{ \| \tilde{B}(k, \cdot)\|} - \frac{ \tilde{B}(k', \cdot) }{ \| \tilde{B}(k', \cdot)\|} \right\|  \geq \psi_0 \tilde{\gamma} \,. \]
Thus the distance based clustering subroutine $\mathcal D$ used in Algorithm 1', such as the minimum spanning tree, can correctly cluster all nodes, i.e., $\hat{g}^{(2)} = g^{(2)}$, if
 \[ \sup_{v,v' \in \mathcal V_2, g_v = g_{v'}} \left\| \frac{\hat{h}_v}{ \| \hat{h}_v \| }  - \frac{\hat{h}_{v'}}{ \| \hat{h}_{v'} \| }  \right\|  < \inf_{v,v' \in \mathcal V_2, g_v \neq g_{v'}}  \left\| \frac{\hat{h}_v}{ \| \hat{h}_v \| }  - \frac{\hat{h}_{v'}}{ \| \hat{h}_{v'} \| }  \right\| \,. \]

Therefore, we only need to show that with high probability, $\left\| \frac{\hat{h}_v}{ \| \hat{h}_v \| } - \frac{ \tilde{B}(g_v, \cdot) }{ \| \tilde{B}(g_v, \cdot)\|} \right\| \leq \delta $ for all nodes $v \in \mathcal V_2$, where 
\begin{equation}\label{ineq:DCdelta}
 0 < \delta < \frac{ \psi_0 \tilde{\gamma}}{4} \,.
 \end{equation}

 By \Cref{lem:DChvbound}, the approximation bound \eqref{eq:DCh-error} and inequality \eqref{ineq:DCdelta} hold with high probability if we choose $a$ to be a constant larger than $4 q_0 / (\psi_0 \tilde{\gamma})$, where $q_0 = q_0(\pi_0, \psi_0, B_0, \tilde \gamma \psi_0/4)$ as specified in \Cref{lem:DChvbound}.
\end{proof}

%% lemma:DCrowdiff
\begin{lemma}[Lower bound of the distances between normalized rows of $\tilde{B}$] \label{lem:DCrowdiff}
If a degree corrected block model satisfies Assumptions A1' and A4, and $\tilde{B}$ is defined as in equation \eqref{def:tildeB}, then
$$
\min_{1<k<k'\le K}\left\|
\frac{ \tilde B (k,\cdot)}{\|\tilde B (k,\cdot)\|}-
\frac{ \tilde B (k',\cdot)}{\|\tilde B (k',\cdot)\|}
\right\|\ge \psi_0\tilde\gamma\,.
$$
\end{lemma}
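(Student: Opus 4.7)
The plan is to reduce the claim to a clean algebraic identity. Let $c_j = |\mathcal I_j^{(1)}|^{-1} \sum_{v' \in \mathcal I_j^{(1)}} \psi_{v'}$, so that $\tilde B(k, j) = c_j \alpha_n B_0(k, j)$ and $c_j \in [\psi_0, 1]$ by A4 together with the identifiability constraint $\psi_{v'} \leq 1$. Write $\tilde b_k = B_0(k, \cdot)/\|B_0(k, \cdot)\|$ for the unit row vector of Assumption A1', and set $\rho_k = \sqrt{\sum_l c_l^2 \tilde b_{k, l}^2}$, which lies in $[\psi_0, 1]$ because $\tilde b_k$ is a unit vector and $c_l^2 \in [\psi_0^2, 1]$. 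The scale factor $\alpha_n$ and the row norm $\|B_0(k, \cdot)\|$ both cancel under normalization, leaving
\[
\frac{\tilde B(k, j)}{\|\tilde B(k, \cdot)\|} = \frac{c_j \tilde b_{k, j}}{\rho_k}.
\]
Assumption A1' supplies $\|\tilde b_k - \tilde b_{k'}\| \geq \tilde\gamma$, and the task is to turn this into a lower bound on the distance between these normalized rows.

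The rest is two elementary steps. First, since $c_j^2 \geq \psi_0^2$ pointwise,
\[
\left\|\frac{\tilde B(k, \cdot)}{\|\tilde B(k, \cdot)\|} - \frac{\tilde B(k', \cdot)}{\|\tilde B(k', \cdot)\|}\right\|^2 = \sum_j c_j^2 \left(\frac{\tilde b_{k, j}}{\rho_k} - \frac{\tilde b_{k', j}}{\rho_{k'}}\right)^2 \geq \psi_0^2 \left\|\frac{\tilde b_k}{\rho_k} - \frac{\tilde b_{k'}}{\rho_{k'}}\right\|^2.
\]
Second, expanding the remaining distance and using $2(1 - \langle \tilde b_k, \tilde b_{k'}\rangle) = \|\tilde b_k - \tilde b_{k'}\|^2$ for unit vectors yields the identity
\[
\left\|\frac{\tilde b_k}{\rho_k} - \frac{\tilde b_{k'}}{\rho_{k'}}\right\|^2 = \frac{(\rho_k - \rho_{k'})^2 + \rho_k \rho_{k'} \|\tilde b_k - \tilde b_{k'}\|^2}{\rho_k^2 \rho_{k'}^2}.
\]
Discarding the nonnegative $(\rho_k - \rho_{k'})^2$ term and using $\rho_k \rho_{k'} \leq 1$ gives $\|\tilde b_k/\rho_k - \tilde b_{k'}/\rho_{k'}\|^2 \geq \|\tilde b_k - \tilde b_{k'}\|^2 \geq \tilde\gamma^2$, and chaining with the first step produces $\|\cdot\|^2 \geq \psi_0^2 \tilde\gamma^2$, as claimed.

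There is no real obstacle; the entire argument is algebraic. The one subtlety is arranging the reduction so that the two pointwise bounds $c_j \in [\psi_0, 1]$ and $\rho_k \in [\psi_0, 1]$ are used in the right places: a naive approach (for example, bounding the chord distance between $D \tilde b_k/\|D\tilde b_k\|$ and $D \tilde b_{k'}/\|D\tilde b_{k'}\|$ by $\|D \tilde b_k - D \tilde b_{k'}\|$ divided by a row norm, with $D = \mathrm{diag}(c_j)$) would cost an extra factor of $\psi_0$ and only give $\psi_0^2 \tilde\gamma$. The identity above is precisely what lets the full factor $\psi_0$ survive.
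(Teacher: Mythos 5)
Your proof is correct and takes essentially the same route as the paper's: both write the normalized rows as a diagonal matrix $\Psi=\mathrm{diag}(c_1,\dots,c_K)$ applied to $u/s-v/t$ (your $D(\tilde b_k/\rho_k-\tilde b_{k'}/\rho_{k'})$), extract the factor $\psi_0$ from $\lambda_{\min}(\Psi)$, and then show that dividing the unit vectors by the weighted norms $s,t\in[\psi_0,1]$ cannot decrease their distance. The only difference is in how that last step is justified --- the paper minimizes $1/s^2+1/t^2-2u^Tv/(st)$ over the box $[\psi_0,1]^2$, which uses $u^Tv\ge 0$ (true here since $B_0$ is entrywise nonnegative), while your exact identity gives the same conclusion without needing that sign condition.
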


\begin{proof}
Define matrix
\[ \Psi =  \textrm{diag}\left(  \frac{ \sum_{v' \in \mathcal I_1^{(1)}} \psi_{v'}}{ | \mathcal I_1^{(1)} | }, ..., \frac{ \sum_{v' \in \mathcal I_K^{(1)}} \psi_{v'}}{ | \mathcal I_K^{(1)} | } \right) \,. \]
% Then we can rewrite the normalized rows of $\tilde{B}$ as
% \[  \frac{\tilde{B}(i, \cdot)^T}{ \| \tilde{B}(i, \cdot)^T \| } = \frac{ \Psi B(i, \cdot)^T }{ \| \Psi B(i, \cdot)^T \| }  = \frac{ \alpha_n \Psi B_0(i, \cdot)^T}{ \| \alpha_n \Psi B_0(i, \cdot)^T \| }  = \frac{\Psi B_0(i, \cdot)^T}{ \| \Psi B_0(i, \cdot)^T \|} \,. \]
%In addition, by (A4), we have $\forall k =1,2,...,K$,
%\[ \psi_0 \leq  \frac{ \sum_{v' \in \mathcal I_k^{(1)}} \psi_{v'}}{ | \mathcal I_k^{(1)} | } \leq 1 \,. \]

We only need to prove that $\left\|\frac{\Psi B_0(k, \cdot)^T}{ \| \Psi B_0(k, \cdot)^T \|} - \frac{\Psi B_0(k', \cdot)^T}{ \| \Psi B_0(k', \cdot)^T \|}   \right\| \geq \psi_0 \tilde{\gamma}$, for any $k \neq k'$. 

Now we define
\begin{align*} w = & \frac{ B_0(k, \cdot)^T}{ \| \Psi B_0(k, \cdot)^T \|} - \frac{ B_0(k', \cdot)^T}{ \| \Psi B_0(k', \cdot)^T \|}
  % \frac{ B_0(k, \cdot)^T/ \|  B_0(k, \cdot)^T \| }{  \| \Psi B_0(k, \cdot)^T \| / \|  B_0(k, \cdot)^T \| } -  \frac{ B_0(k', \cdot)^T/\|  B_0(k', \cdot)^T \| }{  \| \Psi B_0( k', \cdot)^T \| / \|  B_0(k', \cdot)^T \| } \\
  =  u/s - v/t\,,
  \end{align*}
  where $u = \frac{B_0(k, \cdot)^T}{ \|  B_0(k, \cdot)^T \|}$, 
  $v = \frac{B_0(k', 
  \cdot)^T}{ \|  B_0(k', \cdot)^T \|}$, $s=\frac{\|  \Psi B_0(k, \cdot)^T \|}{\| B_0(k, \cdot)^T \|}$, and $t=\frac{\|  \Psi B_0(k', \cdot)^T \|}{\| B_0(k', \cdot)^T \|}$.
By Assumption A4, we have
\[
 \psi_0  \le \frac{\|  \Psi B_0(k, \cdot)^T \|}{\| B_0( k, \cdot)^T \|} \leq 1,~~
\forall~k \,. 
\]
Thus,
\[ \| w \| \geq \min_{\psi_0 \leq s,t \leq 1} \left\| \frac{u}{s} - \frac{v}{t}  \right\| \,.  \]

Because $u$ and $v$ are two unit vectors with $u^T v \ge 0$, 
it is straightforward to check that the function
\[ f(t,s) = \left\| \frac{u}{s} - \frac{v}{t}  \right\|^2 = \frac{1}{t^2} + \frac{1}{s^2} - \frac{2}{ts} u^Tv, \quad \psi_0 \leq t,s \leq 1 \]
 reaches its minimum $\| u-v\|^2$, when $t=s=1$. Therefore,
\[ \| w \| \geq \| u - v \| = \left\|  \frac{B_0(k, \cdot)^T}{ \|  B_0(k, \cdot)^T \|} - \frac{B_0( k', \cdot)^T}{ \|  B_0( k', \cdot)^T \|} \right\| \geq \tilde{\gamma} \,. \]

Using the fact that smallest eigenvalue of $\Psi$ satisfies $\lambda_{\min} (\Psi) \geq \psi_0$, we have
\[  \left\| \frac{ \Psi B_0( k, \cdot)^T }{  \| \Psi B_0( k, \cdot)^T \| } -  \frac{ \Psi B_0( k', \cdot)^T }{  \| \Psi B_0( k', \cdot)^T \| } \right\| = \left\|  \Psi w \right\| \geq \psi_0 \|w\| \geq \psi_0 \tilde{\gamma}  \,. \]
\end{proof}

%% lemma: DChvbound
\begin{lemma}\label{lem:DChvbound}
Given $\mathcal V_1$, $\mathcal V_2$, $g^{(1)}$, and $\hat g^{(1)}$ satisfying 
the conditions of \Cref{lem:crossclustsphere}, let $\alpha_n = (q / \delta)
 \log n / n$. For any $\delta > 0$, there exists $q_0 = q_0(\pi_0, \psi_0, B_0, \delta)$, such that if $q \geq q_0$,
\[ \mathbb{P} \left(  \left\| \frac{\hat{h}_v}{ \| \hat{h}_v \| } - \frac{ \tilde{B}(g_v, \cdot) }{ \| \tilde{B}(g_v, \cdot)\|} \right\| \le \delta, ~~\forall~v \in \mathcal V_2  \right) = 1 - O(n^{-1})\,, \] 
where the probability is conditional on $\mathcal V_1, \mathcal V_2$ and $\hat{g}^{(1)}$.
\end{lemma}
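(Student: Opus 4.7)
The plan is to follow the template of \Cref{lem:individualhv} but with the DCBM analogue $\psi_v\tilde B(g_v,\cdot)$ replacing $B(g_v,\cdot)$ in the unnormalized bound, and then convert this into a bound on the normalized vectors at the end. Since $\mathbb{E}[A_{v,v'}]=\psi_v\psi_{v'}B(g_v,g_{v'})$ for $v\in\mathcal V_2$ and $v'\in\mathcal V_1$, averaging $A_{v,v'}$ over the true cluster $\mathcal I_k^{(1)}$ yields mean exactly $\psi_v\tilde B(g_v,k)$ by the definition \eqref{def:tildeB}. I would therefore write, for each $k$,
\[
\hat h_{v,k}-\psi_v\tilde B(g_v,k) = T_1+T_2+T_3,
\]
where $T_1$ is the numerator discrepancy caused by nodes in the symmetric difference $\hat{\mathcal I}_k^{(1)}\triangle\mathcal I_k^{(1)}$, $T_2$ is the denominator error from replacing $|\mathcal I_k^{(1)}|$ by $|\hat{\mathcal I}_k^{(1)}|$, and $T_3$ is the deviation of $|\mathcal I_k^{(1)}|^{-1}\sum_{v'\in\mathcal I_k^{(1)}}A_{v,v'}$ from its expectation.

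Each of the three terms is controlled by Bernstein's inequality in essentially the same way as in the proof of \Cref{lem:individualhv}, the only change being that the Bernoulli parameters are $\psi_v\psi_{v'}B(g_v,g_{v'})\le B(g_v,g_{v'})$ rather than $B(g_v,g_{v'})$, so the variance proxies only improve. The independence of $\hat g^{(1)}$ from the cross-edges supplied by sample splitting allows us to condition on $\hat g^{(1)}$ and treat the index set in $T_1$ as fixed, with its size controlled by Assumption~A3. With $\alpha_n=(q/\delta)\log n/n$ and $q$ chosen large enough in terms of $\delta$, $\pi_0$, $\psi_0$, $B_0$, each term is at most $\epsilon\alpha_n/\sqrt K$ with probability $1-O(n^{-2})$ for a small $\epsilon=\epsilon(\delta,\psi_0,B_0)$. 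Summing over $k$ gives
\[
\|\hat h_v-\psi_v\tilde B(g_v,\cdot)\|\le \epsilon\alpha_n.
\]

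To convert this into the normalized bound I would use the identity $\psi_v\tilde B(g_v,\cdot)/\|\psi_v\tilde B(g_v,\cdot)\|=\tilde B(g_v,\cdot)/\|\tilde B(g_v,\cdot)\|$, which holds because $\psi_v>0$ is a common positive scalar, together with the standard perturbation bound $\|x/\|x\|-y/\|y\|\|\le 2\|x-y\|/\|y\|$ for nonzero $x,y$. What is then needed is a lower bound $\|\psi_v\tilde B(g_v,\cdot)\|\ge c'\alpha_n$ for a positive constant $c'=c'(\pi_0,\psi_0,B_0)$; this follows from A4, which gives both $\psi_v\ge\psi_0$ and $|\mathcal I_k^{(1)}|^{-1}\sum_{v'\in\mathcal I_k^{(1)}}\psi_{v'}\ge\psi_0$, combined with the fact that each row of the constant matrix $B_0$ has a $B_0$-dependent positive norm (since its normalization is well-defined under A1'). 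Choosing $q$ large enough to absorb $c'$ into $\epsilon$ then yields the desired $\delta$ bound for a fixed $v$, and a union bound over $v\in\mathcal V_2$ finishes the argument. The main obstacle I expect is not any single step but rather careful bookkeeping of the way $\delta$, $q$, and $\alpha_n$ interact: the Bernstein tails scale as $\alpha_n^2 n/(\alpha_n+\delta\alpha_n)\asymp q\delta\log n/(1+\delta)$, so the constant $q_0(\pi_0,\psi_0,B_0,\delta)$ must be taken large enough that this exponent beats the union-bound overhead while the constants from the normalization step remain absorbed.
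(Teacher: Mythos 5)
Your proposal is correct and follows essentially the same route as the paper's proof: the same $T_1+T_2+T_3$ decomposition of $\hat h_{v,k}-\psi_v\tilde B(g_v,k)$ with Bernstein bounds inherited from \Cref{lem:individualhv}, the same reduction of the normalized bound via the scale-invariance $\psi_v\tilde B(g_v,\cdot)/\|\psi_v\tilde B(g_v,\cdot)\|=\tilde B(g_v,\cdot)/\|\tilde B(g_v,\cdot)\|$ and a perturbation inequality for normalized vectors, and the same lower bound $\|\psi_v\tilde B(g_v,\cdot)\|\ge\psi_0^2\alpha_n\min_j\|B_0(j,\cdot)\|$ from A4. The only cosmetic difference is that the paper divides by $\max\{\|\hat h_v\|,\|\psi_v\tilde B(g_v,\cdot)\|\}$ in the perturbation step where you divide by $\|\psi_v\tilde B(g_v,\cdot)\|$ alone, which is an equally valid form of the same inequality.
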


\begin{proof}
Let $a = q / \delta$ and
$L= \min_k \| B_0(k, \cdot) \|>0$.  
First, by the definition of $\tilde{B}$, we have
\[ \max \left\{  \| \hat{h}_v \| ,\| \psi_v \tilde{B}(g_v, \cdot)\|  \right\}  \geq \| \psi_v \tilde{B}(g_v, \cdot)\| \geq \psi_0^2 \alpha_n \min_j \| B_0(j, \cdot) \| = \frac{\psi_0^2 L q}{\delta} \frac{\log n}{n} \,,  \]
Therefore,
\[\begin{split}
 \left\| \frac{\hat{h}_v}{ \| \hat{h}_v \| } - \frac{ \tilde{B}(g_v, \cdot) }{ \| \tilde{B}(g_v, \cdot)\|} \right\| &= \left\| \frac{\hat{h}_v}{ \| \hat{h}_v \| } - \frac{ \psi_v \tilde{B}(g_v, \cdot) }{ \| \psi_v \tilde{B}(g_v, \cdot)\|} \right\| \\
 & \leq 2 \frac{ \| \hat{h}_v - \psi_v \tilde{B}(g_v, \cdot)  \| }{ \max\{  \| \hat{h}_v \| ,\| \psi_v \tilde{B}(g_v, \cdot)\|  \} } \\
 & \leq \frac{2 \delta }{\psi_0^2 L q} \frac{n}{ \log n}  \| \hat{h}_v - \psi_v \tilde{B}(g_v, \cdot)  \| \,.
 \end{split}\]
So we only need to bound 
\[ \mathbb{P} \left(\| \hat{h}_v - \psi_v \tilde{B}(g_v, \cdot) \| \ge \frac{\psi_0^2 L q}{2} \frac{\log n}{n} \right) \leq 
\sum_{k=1}^K \mathbb{P} \left( \left| \hat{h}_{v,k} - \psi_v \tilde{B}(g_v, k) \right| \ge \frac{\psi_0^2 L q }{2\sqrt{K}} \frac{\log n}{n} \right) \,, \] and the rest
of the proof follows by adapting that of \Cref{lem:individualhv}.  The details
are given below.

Since inequalities \eqref{ineq:I_kbound}, \eqref{eq:I_hat_bound} in 
\Cref{lem:individualhv} still hold, for any $k$, we have
\[\begin{split}
\left| \hat{h}_{v,k} -  \psi_v \tilde{B}(g_v, k) \right| & \leq \left| \frac{ \sum_{ v' \in \hat{\mathcal I}_k^{(1)} } A_{v,v'} - \sum_{v' \in \mathcal I_k^{(1)}} A_{v, v'} }{ |\hat{\mathcal I}_k^{(1)}| } \right| + \left| \frac{ \sum_{ v' \in \mathcal I_k^{(1)} } A_{v,v'} }{ |\hat{\mathcal I}_k^{(1)}| } -  \frac{ \sum_{ v' \in \mathcal I_k^{(1)} } A_{v,v'} }{ |\mathcal I_k^{(1)}| } \right| \\ 
& \qquad + \left|  \frac{ \sum_{ v' \in \mathcal I_k^{(1)} } A_{v,v'} }{ |\mathcal I_k^{(1)}| }  - \frac{ \sum_{v' \in \mathcal I_k^{(1)}} \psi_{v'}}{ | \mathcal I_k^{(1)} | } \psi_v B(g_v , k) \right| \\
& \leq \frac{ \left| \sum_{ v' \in \hat{\mathcal I}_k^{(1)} } A_{v,v'} - \sum_{v' \in \mathcal I_k^{(1)}} A_{v, v'} \right| }{ \pi_0 n / 2 } + \frac{ \left|  | \mathcal I_k^{(1)}| -  |\hat{\mathcal I}_k^{(1)}|  \right| }{ \pi_0^2 n^2 / 2 }  \sum_{ v' \in \mathcal I_k^{(1)} } A_{v,v'} \\
& \qquad +  \left|  \frac{ \sum_{ v' \in \mathcal I_k^{(1)} } A_{v,v'} }{ |\mathcal I_k^{(1)}| }  -  \frac{ \sum_{v' \in \mathcal I_k^{(1)}} \psi_{v'}}{ | \mathcal I_k^{(1)} | } \psi_v B(g_v, k) \right|  \\
& = T_1 + T_2 + T_3 \,.
\end{split}\]

Now we only need to bound the three terms individually. First by \eqref{ineq:I_kbound} we know
$|\{ v': \hat{g}_{v'}^{(1)} \neq g_{v'}^{(1)} \}| \leq n / f(\pi_0 a \log n)$. Then using Bernstein's inequality we have for $n$ large enough,
\[\begin{split}
\mathbb{P} \left( T_1 \ge \frac{\psi_0^2 L q}{6 \sqrt{K}} \frac{\log n}{n} \right) & \leq 
\mathbb{P} \left( \sum_{v': \hat{g}_{v'}^{(1)} \neq g_{v'}^{(1)}} A_{v,v'} \ge
 \frac{\pi_0 n}{2}  \frac{\psi_0^2 L q}{6 \sqrt{K}}  \frac{\log n}{n} \right) \\
& \leq n^{-\frac{c_1'^2(\pi_0,\psi_0, K,L)q^2}{a/f(\pi_0 a \log n )+c_1'(\pi_0,\psi_0, K,L)q/3}} \le
 n^{-c_1'(\pi_0,\psi_0, K,L)q}\,,
\end{split}\]
where $c_1'(\pi_0,\psi_0, K,L)$ is a positive constant depending only on $\pi_0$, $\psi_0$, $K$ and $L$.

To control $T_2$, similarly we have, for $n$ large enough, using Bernstein's inequality,
\[\begin{split}
\mathbb{P} &\left( T_2 \ge \frac{\psi_0^2 L q}{6\sqrt{K}} \frac{\log n}{n}  \right) \leq \mathbb{P} \left( 
\sum_{ v' \in \mathcal I_k^{(1)} } A_{v,v'} \ge
 \frac{\pi_0^2 n^2}{2\left|  | \mathcal I_k^{(1)}| -  |\hat{\mathcal I}_k^{(1)}|  \right|}  \frac{\psi_0^2 L q}{6\sqrt{K}} \frac{\log n}{n} \right) \\
& \qquad \leq  \mathbb{P} \left( 
 \sum_{ v' \in \mathcal I_k^{(1)} }  A_{v,v'}  \ge
 \frac{L \psi_0^2 \pi_0^2 q \log n f(\pi_0a\log n)}{12\sqrt{K}}
   \right) \\
& \qquad \leq n^{-\frac{c_2'^2(\pi_0, \psi_0, K, L)f^2(\pi_0 a\log n) q^2}{a+c_2'(\pi_0, \psi_0, K, L)f(\pi_0 a\log n) q/3}}\le n^{-c_2'(\pi_0, \psi_0, K, L) q}\,,
\end{split}\]
where $c_2'(\pi_0, \psi_0, K, L)$ is another positive constant function of $\pi_0$, $\psi_0$, $K$ and $L$.

Directly applying Bernstein's inequality to $T_3$, we have
\[\begin{split}
 \mathbb{P} \left( T_3 > \frac{\psi_0^2 L q}{6\sqrt{K}} \frac{\log n}{n} \right) &  \le
  \mathbb{P} \left( \bigg|\sum_{ v' \in \mathcal I_k^{(1)} } [ A_{v,v'}  - \psi_v \psi_{v'} B(g_v, k) ]\bigg|\ge  \frac{\psi_0^2 L q}{6\sqrt{K}}\frac{\log n}{n} \pi_0 n \right) \\
 & \leq 2 n^{-c_3'(\pi_0, \psi_0, K, L, \delta) q} \,, 
 \end{split}\]
where $c_3'(\pi_0, \psi_0, K, L, \delta)$ is also a positive constant function independent of $n$.

The claimed result follows by choosing $q$ large enough and applying union
bound over all $v\in\mathcal V_2$.
\end{proof}

% \appendix
% \section{Technical proofs}
% \label{sec:proof}
% \input{proof}

%%%%%%% APPENDIX
%\appendix
%
%\section{Additional Proofs for \Cref{sec:sparsistency}}
%\label{sec:proofs-sparsistency}
%\input{additional-proofs-sparsistency}

%\section{Additional Proofs for \Cref{sec:agnostic}}
%\label{sec:proofs-agnostic}
%\input{additional-proofs-agnostic}

\bibliographystyle{apa-good}
\bibliography{network}

\begin{thebibliography}{25}
\expandafter\ifx\csname natexlab\endcsname\relax\def\natexlab#1{#1}\fi
\expandafter\ifx\csname url\endcsname\relax
  \def\url#1{{\tt #1}}\fi
\expandafter\ifx\csname urlprefix\endcsname\relax\def\urlprefix{URL }\fi

\bibitem[{Abbe et~al.(2014)Abbe, Bandeira, \& Hall}]{AbbeBH14}
Abbe, E., Bandeira, A.~S., \& Hall, G. (2014).
\newblock Exact recovery in the stochastic block model.
\newblock {\em arXiv preprint arXiv:1405.3267\/}.

\bibitem[{Adamic \& Glance(2005)}]{PolBlog}
Adamic, L.~A., \& Glance, N. (2005).
\newblock The political blogosphere and the 2004 us election: divided they
  blog.
\newblock In {\em Proceedings of the 3rd international workshop on Link
  discovery\/}, (pp. 36--43). ACM.

\bibitem[{Anandkumar et~al.(2014)Anandkumar, Ge, Hsu, \& Kakade}]{Anandkumar14}
Anandkumar, A., Ge, R., Hsu, D., \& Kakade, S.~M. (2014).
\newblock A tensor approach to learning mixed membership community models.
\newblock {\em Journal of Machine Learning Research\/}, {\em 15\/}, 2239--2312.

\bibitem[{Bickel \& Chen(2009)}]{BickelC09}
Bickel, P.~J., \& Chen, A. (2009).
\newblock A nonparametric view of network models and newman--girvan and other
  modularities.
\newblock {\em Proceedings of the National Academy of Sciences\/}, {\em
  106\/}(50), 21068--21073.

\bibitem[{Chaudhuri et~al.(2012)Chaudhuri, Chung, \& Tsiatas}]{ChaudhuriCT12}
Chaudhuri, K., Chung, F., \& Tsiatas, A. (2012).
\newblock Spectral clustering of graphs with general degrees in the extended
  planted partition model.
\newblock {\em JMLR: Workshop and Conference Proceedings\/}, {\em 2012\/},
  35.1--35.23.

\bibitem[{Chen et~al.(2012)Chen, Sanghavi, \& Xu}]{ChenSX12}
Chen, Y., Sanghavi, S., \& Xu, H. (2012).
\newblock Clustering sparse graphs.
\newblock In P.~Bartlett, F.~Pereira, C.~Burges, L.~Bottou, \& K.~Weinberger
  (Eds.) {\em Advances in Neural Information Processing Systems 25\/}, (pp.
  2213--2221).

\bibitem[{{Coja-Oghlan}(2010)}]{Coja-Oghlan10}
{Coja-Oghlan}, A. (2010).
\newblock Graph partitioning via adaptive spectral techniques.
\newblock {\em Combinatorics, Probability and Computing\/}, {\em 19\/},
  227--284.

\bibitem[{Decelle et~al.(2011)Decelle, Krzakala, Moore, \&
  Zdeborov{\'a}}]{Decelle11}
Decelle, A., Krzakala, F., Moore, C., \& Zdeborov{\'a}, L. (2011).
\newblock Asymptotic analysis of the stochastic block model for modular
  networks and its algorithmic applications.
\newblock {\em Physical Review E\/}, {\em 84\/}(6), 066106.

\bibitem[{Faust \& Wasserman(1992)}]{FaustW92}
Faust, K., \& Wasserman, S. (1992).
\newblock Blockmodels: Interpretation and evaluation.
\newblock {\em Social networks\/}, {\em 14\/}(1), 5--61.

\bibitem[{Fishkind et~al.(2013)Fishkind, Sussman, Tang, Vogelstein, \&
  Priebe}]{Fishkind13}
Fishkind, D.~E., Sussman, D.~L., Tang, M., Vogelstein, J.~T., \& Priebe, C.~E.
  (2013).
\newblock Consistent adjacency-spectral partitioning for the stochastic block
  model when the model parameters are unknown.
\newblock {\em SIAM Journal on Matrix Analysis and Applications\/}, {\em
  34\/}(1), 23--39.

\bibitem[{Holland et~al.(1983)Holland, Laskey, \& Leinhardt}]{Holland83}
Holland, P.~W., Laskey, K.~B., \& Leinhardt, S. (1983).
\newblock Stochastic blockmodels: First steps.
\newblock {\em Social networks\/}, {\em 5\/}(2), 109--137.

\bibitem[{Jin(2012)}]{Jin12}
Jin, J. (2012).
\newblock Fast community detection by {SCORE}.
\newblock arXiv:1211.5803.

\bibitem[{Karrer \& Newman(2011)}]{KarrerN11}
Karrer, B., \& Newman, M.~E. (2011).
\newblock Stochastic blockmodels and community structure in networks.
\newblock {\em Physical Review E\/}, {\em 83\/}(1), 016107.

\bibitem[{Kemp et~al.(2006)Kemp, Tenenbaum, Griffiths, Yamada, \&
  Ueda}]{Kemp06}
Kemp, C., Tenenbaum, J.~B., Griffiths, T.~L., Yamada, T., \& Ueda, N. (2006).
\newblock Learning systems of concepts with an infinite relational model.
\newblock In {\em AAAI\/}, vol.~3, (p.~5).

\bibitem[{Le et~al.(2014)Le, Levina, \& Vershynin}]{LeLV14}
Le, C.~M., Levina, E., \& Vershynin, R. (2014).
\newblock Optimization via low-rank approximation, with applications to
  community detection in networks.
\newblock {\em arXiv preprint arXiv:1406.0067\/}.

\bibitem[{Lei \& Rinaldo(2013)}]{LeiR14}
Lei, J., \& Rinaldo, A. (2013).
\newblock Consistency of spectral clustering in sparse stochastic block models.
\newblock {\em arXiv preprint arXiv:1312.2050\/}.

\bibitem[{Massoulie(2013)}]{Massoulie13}
Massoulie, L. (2013).
\newblock Community detection thresholds and the weak ramanujan property.
\newblock {\em arXiv preprint arXiv:1311.3085\/}.

\bibitem[{McSherry(2001)}]{McSherry01}
McSherry, F. (2001).
\newblock Spectral partitioning of random graphs.
\newblock In {\em Foundations of Computer Science, 2001. Proceedings. 42nd IEEE
  Symposium on\/}, (pp. 529--537). IEEE.

\bibitem[{Mossel et~al.(2013)Mossel, Neeman, \& Sly}]{MosselNS13}
Mossel, E., Neeman, J., \& Sly, A. (2013).
\newblock A proof of the block model threshold conjecture.
\newblock {\em arXiv preprint arXiv:1311.4115\/}.

\bibitem[{Newman \& Girvan(2004)}]{NewmanG04}
Newman, M.~E., \& Girvan, M. (2004).
\newblock Finding and evaluating community structure in networks.
\newblock {\em Physical review E\/}, {\em 69\/}(2), 026113.

\bibitem[{Rohe et~al.(2011)Rohe, Chatterjee, \& Yu}]{RoheCY11}
Rohe, K., Chatterjee, S., \& Yu, B. (2011).
\newblock Spectral clustering and the high-dimensional stochastic blockmodel.
\newblock {\em The Annals of Statistics\/}, {\em 39\/}, 1878--1915.

\bibitem[{Skala(2013)}]{Skala13}
Skala, M. (2013).
\newblock Hypergeometric tail inequalities: ending the insanity.
\newblock {\em arXiv preprint arXiv:1311.5939\/}.

\bibitem[{Vu(2014)}]{Vu14}
Vu, V. (2014).
\newblock A simple svd algorithm for finding hidden partitions.
\newblock {\em arXiv preprint arXiv:1404.3918\/}.

\bibitem[{Yan et~al.(2014)Yan, Shalizi, Jensen, Krzakala, Moore, Zdeborov{\'a},
  Zhang, \& Zhu}]{Yan14}
Yan, X., Shalizi, C., Jensen, J.~E., Krzakala, F., Moore, C., Zdeborov{\'a},
  L., Zhang, P., \& Zhu, Y. (2014).
\newblock Model selection for degree-corrected block models.
\newblock {\em Journal of Statistical Mechanics: Theory and Experiment\/}, {\em
  2014\/}(5), P05007.

\bibitem[{Zhao et~al.(2012)Zhao, Levina, \& Zhu}]{ZhaoLZ12}
Zhao, Y., Levina, E., \& Zhu, J. (2012).
\newblock Consistency of community detection in networks under degree-corrected
  stochastic block models.
\newblock {\em The Annals of Statistics\/}, {\em 40\/}(4), 2266--2292.

\end{thebibliography}
\end{document}